\tikzstyle{tran}=[thick,draw,->,>=stealth,rounded corners]
\tikzstyle{state}=[circle,thick,draw,minimum size=1.5em,%
\newtheorem{example}{Example}
\newtheorem{theorem}{Theorem}
\newtheorem{lemma}{Lemma}
\newtheorem{definition}{Definition}
\newcommand{\pomdp}{P}
\newcommand{\cpomdp}{\mathcal{C}}
\newcommand{\states}{S}
\newcommand{\act}{{A}}
\newcommand{\trans}{\delta}
\newcommand{\obs}{\mathcal{Z}}
\newcommand{\obsmap}{\mathcal{O}}
\newcommand{\distr}{\mathcal{D}}
\newcommand{\cons}{C}
\newcommand{\obsfunc}{\mathcal{O}}
\newcommand{\Rset}{\mathbb{R}}
\newcommand{\Nset}{\mathbb{N}}
\newcommand{\reloads}{R}
\newcommand{\ob}{o}
\newcommand{\Ob}{O}
\newcommand{\mysucc}{\mathit{Succ}}
\newcommand{\supp}{\mathit{supp}}
\newcommand{\eps}{\varepsilon}
\newcommand{\ca}{\mathit{ca}}
\newcommand{\tstep}{t}
\newcommand{\reslevrv}{L}
\newcommand{\initdistr}{\lambda_0}
\newcommand{\initlev}{\ell_0}
\newcommand{\probm}{\mathbb{P}}
\newcommand{\expv}{\mathbb{E}}
\newcommand{\tgt}{G}
\newcommand{\cost}{\mathit{cost}}
\newcommand{\tcost}{\textit{TC}}
\newcommand{\shield}{\sigma}
\newcommand{\en}{1}
\newcommand{\dis}{0}
\newcommand{\hist}{\textit{Hist}}
\newcommand{\nint}[1]{[#1]}
\newcommand{\tlev}{\mathit{TLev}_{=1}}
\newcommand{\tlevPR}{\mathit{TLev}_{>0}}
\newcommand{\belsupf}{\mathit{Bel}}
\newcommand{\len}{\mathit{len}}
\newcommand{\belsupup}{\Gamma}
\newcommand{\resup}{\Delta}
\newcommand{\resupinv}{\Psi}
\newcommand{\bels}{B}
\newcommand{\tok}[1]{{#1}_{T}}
\newcommand{\guess}{\alpha}
\newcommand{\eguess}{\varepsilon}
\newcommand{\obc}{\mathit{oc}}
\newcommand{\bsucc}{\mysucc}
\newcommand{\toremove}{\mathit{Rem}}
\title{Shielding in Resource-Constrained Goal POMDPs}
\author{
	Michal Ajdar\'{o}w, \v{S}imon Brlej, Petr Novotn\'{y}
}
\begin{document}

\maketitle
\begin{abstract}
We consider partially observable Markov decision processes (POMDPs) modeling an agent that needs a supply of a certain resource (e.g., electricity stored in batteries) to operate correctly. The resource is consumed by agent's actions and can be replenished only in certain states. The agent aims to minimize the expected cost of reaching some goal while preventing resource exhaustion, a problem we call \emph{resource-constrained goal optimization} (RSGO). We take a two-step approach to the RSGO problem. First, using formal methods techniques, we design an algorithm computing a \emph{shield} for a given scenario: a procedure that observes the agent and prevents it from using actions that might eventually lead to resource exhaustion. Second, we augment the POMCP heuristic search algorithm for POMDP planning with our shields to obtain an algorithm solving the RSGO problem. We implement our algorithm and present experiments showing its applicability to benchmarks from the literature.
\end{abstract}


\section{Introduction}

Partially observable Markov decision processes (POMDPs) are the standard model for decision making under uncertainty. While POMDPs are computationally demanding to solve, advances in heuristic search~\cite{SV:2010:POMCP} and reinforcement learning~\cite{BhaBWGB:2020:pomdp-alphazero} allowed for tackling large POMDP models. Recently, an increasing attention is being paid to \emph{safety aspects} of autonomous decision making as opposed to pure optimization of the expected rewards or costs~\cite{GarF:2015:saferl}. Indeed, heuristic and learning techniques can be susceptible to leading the decision-making agent into dangerous situations, and additional care must be taken to \emph{formally guarantee} the absence of such a risky behavior.

\paragraph{Shielding.} A promising approach to obtaining such safety guarantees is offered by the concept of \emph{permissive controller synthesis}~\cite{DraFKPU:2015-permissive-synth-journal, JunJDTKL2016:safe-rl-permissive}, which was later distilled into the concept of \emph{shielding}~\cite{AlsBEKNT:2018:shielding}. Intuitively, a shield \( \shield \) is a function which inputs the agent's current information \( X \) (in a POMDP, this would be the whole history of the agent's actions and observations) and outputs a list of \emph{allowed} actions which the agent can use in the current situation without risking violation of its safety specification in the future.
The shielding process then forces the agent to only consider, in a situation described by available information \( X \), the allowed actions in the list \( \shield(X) \). 
Shields are typically computed via \emph{formal methods} approaches, and hence they can guarantee that the shielded algorithm satisfies the desired safety specification.

\paragraph{Consumption Models and RSGO Problem.} In this paper, we focus on safe decision making in agents that require some uninterrupted supply of a resource (such as electricity) to operate correctly. Such agents can be encountered, e.g., in robotics~\cite{NotRE:2018:persistification}, and we model them via the \emph{consumption} paradigm~\cite{BCKN:consumption-games}, where the available amount of the resource is represented by an integer from the set \( \{0,1,\ldots,\ca\} \), the number \( \ca \) denoting the agent's \emph{battery capacity}. Each action of the agent consumes some amount of the resource (i.e., decreases the resource level) and the resource can be replenished to the full capacity only in special \emph{reload states} (e.g., charging stations). The safety specification is that the agent must never run out of the resource. A crucial property of consumption models (such as \emph{consumption MDPs}~\cite{BlaBNOTT:2020:cons-mdp-buchi}) is that the resource levels are \emph{not} encoded in the model's states (since this 
 would blow up the state space by a factor exponential in the bit-size of \( \ca \)). Instead, the amount of resource consumed is specified for each state-action pair, and the agent then tracks its resource level itself. To account for the fact that, apart from preventing resource exhaustion, the agent aims to do something useful, we study the \emph{resource-safe goal optimization (RSGO)} problem in the consumption setting: the agent aims to reach a given set of \emph{goal states} with probability 1 (i.e., almost-surely) at the minimal expected cost, while preventing resource exhaustion.

\paragraph{Limitations of Previous Work.} The previous approaches to consumption models for resource-constrained decision making under uncertainty suffer from two key limitations:
First, they consider only perfectly observable setting, i.e., consumption MDPs.
Second, they only consider computing policies satisfying \emph{qualitative criteria:} avoiding resource exhaustion and almost-surely reaching a goal state; optimization of \emph{quantitative criteria,} such as the expected cost of achieving a goal, was not considered.
\paragraph{Our Contribution.}
In this paper, we overcome both aforementioned limitations: we present a method for solving a combination of qualitative and quantitative criteria in \emph{partially observable} consumption MDPs (CoPOMDPs). Our contribution has two essential parts: First, we show how to design an algorithm computing shields in consumption POMDPs prohibiting exactly those behaviors that lead to resource exhaustion or that violate the possibility of eventually reaching a goal state. Hence, our shields handle the qualitative aspect of the RSGO problem. Second, to handle also the quantitative aspect, we augment the well-known POMCP heuristic planning algorithm~\cite{SV:2010:POMCP} with our shields, thus obtaining an algorithm for the finite-horizon approximation of the RSGO problem. We implement our new algorithm and demonstrate its applicability to benchmarks derived from the literature. 

\paragraph{Outline of Techniques.} The previous work~\cite{BlaBNOTT:2020:cons-mdp-buchi} presented an algorithm that for \emph{perfectly observable} consumption MDPs (CoMDPs) computes a policy ensuring almost-sure goal reachability while preventing resource exhaustion. The algorithm runs in time polynomial in size of the CoMDP, so in particular in time polynomial in the bit-size of \( \ca \). We reduce the computation of shields for CoPOMDPs to the CoMDP problem in~\cite{BlaBNOTT:2020:cons-mdp-buchi}. The reduction is non-trivial: in particular, we show that the standard technique of constructing a \emph{belief support MDP}~\cite{ChaCGK:2016:POMDP-belsup-goalcost,JunJS:2021:pomdp-asreach-SMT, BaiBG:2008:POMDP-asreach-dec} and then applying the algorithm for perfectly observable MDPs is not directly usable in the consumption setting. 

\paragraph{Related Work.}

Shielding  in MDPs and POMDPs was studied w.r.t. state safety specification (avoiding critical states)~\cite{AlsBEKNT:2018:shielding}, ensuring almost-sure reachability of a goal state~\cite{JunJS:2021:pomdp-asreach-SMT}, or guaranteeing that the payoff is almost-surely above some threshold~\cite{ChaNPRZ:2017:mcts-safety}. The related notion of permissive controller synthesis has been studied in more quantitative settings of probabilistic reachability and expected cost~\cite{DraFKPU:2015-permissive-synth-journal, JunJDTKL2016:safe-rl-permissive}. To our best knowledge, our paper is the first to consider shielding for resource-constrained agents modeled via POMDPs. While shielding for CoPOMDPs could be in principle reduced to state safety shielding by encoding resource level into states, this would blow-up the state space by the factor equal to the battery capacity. As demonstrated already for CoMDPs in~\cite{BlaBNOTT:2020:cons-mdp-buchi}, the ``resource levels in states'' approach is highly inefficient when compared to methods tailored to consumption models. 

Resource-constrained planning is particularly relevant in the application domain of autonomous (land-based/aerial/underwater) vehicles, e.g.,~\cite{NotRE:2018:persistification,MitCCSM:2015:control-fuel-robots,EatKCM:2018:fuel-pomdp-robot}. In formal methods and verification, resource-constrained agents are typically modeled in the consumption framework (used in this paper) or \emph{energy} framework. The former has been considered in both non-probabilistic~\cite{BCKN:consumption-games,BKKN:consumption-payoff} and probabilistic~\cite{BlaBNOTT:2020:cons-mdp-buchi,BlaCNOTT:2021:FiMDP} settings, but not in partially observable ones. The energy framework differs from the consumption one by handling reloads: instead of atomic reloads in reload states, a ``negative consumption'' is enabled for some state-action pairs, allowing for incremental reloading. Various classes of energy models have been considered, e.g.,~\cite{CdAHS:resource-interfaces,BFLMS:weighted-automata-inf-runs,BKN:energy-mp-ATVA,MSTW:energy-parity-MDPs,DDGRT10:PO-energy-MP}, and while they can be seen as more general than consumption models, they are not known to admit algorithms running in time polynomial in the bit-size of the capacity \( \ca \).

The constrained optimization aspect present in CoPOMDPs is similar in spirit to \emph{constrained (PO)MDPs} (C(PO)MDPs)~\cite{Altman:1999:CMDP-book, UndH10:constrained-pomdp-online, PouMPKGB15:constrained-POMDP}. While both approaches fit the same framework of constrained optimization, the types of constraints are actually quite different. In particular, Co(PO)MDPs \emph{cannot} be viewed as a special case of C(PO)MDPs, and vice versa. The key difference is that in C(PO)MDPs, there are penalty functions determining one-step penalties, and the constraint is that the \emph{expected aggregated} penalty is below a given threshold. (Aggregation functions such as discounted, total, or mean payoff are typically considered.) On the other hand, the constraint in Co(PO)MDPs is on the \emph{intermediate} values of the consumption, not on the expectation of its aggregate values. C(PO)MDPs can impose constraints such as “the expected total amount of the resource consumed by the agent is \( \leq B \),” which does not guarantee that the agent does not run out of the resource between two reloads. Hence, the two models are incomparable.

\section{Preliminaries}

We denote by \( \distr(X) \) the set of all probability distributions over an at most countable set \( X \) and by \( \supp(d) \) the support of a distribution \( d \). For \( n\in \Nset \) we denote by \( \nint{n} \) the integer interval \( \{0,1,\ldots,n\} \cup\{\bot\} \), where \( \bot \) is a special element 
deemed smaller than all integers.

\begin{definition}[CoPOMDPs.]
	A \emph{consumption partially observable Markov decision process (CoPOMDP)} is a
	tuple $\cpomdp=(\states,\act,\trans,\obs,\obsmap,\cons,\reloads,\ca)$ 
	where
	$\states$ is a finite set of \emph{states},
	$\act$ is a finite set of \emph{actions},
	$\trans:\states\times\act \rightarrow \distr(\states)$ is a 
	\emph{probabilistic transition function} that given a state $s$ and an
	action $a \in \act$ gives the probability distribution over the successor 
	states, 
	$\obs$ is a finite set of \emph{observations}, and
	$\obsmap:\states\rightarrow \distr(\obs)$ is a probabilistic 
	\emph{observation function} that 
	maps every state to a distribution over observations, 
	\( \cons \colon \states \times \act \rightarrow \Nset \)
	is a \emph{resource consumption function,} \( \reloads \subseteq \states \) is the set of \emph{reload states,} and \( \ca \in \Nset^+ \) is a \emph{resource capacity.}
%
%
	
	We often abbreviate $\trans(s,a)(s')$ and $\obsfunc(s)(\ob)$ by 
	$\trans(s'|s,a)$ and $\obsfunc(\ob|s)$, respectively. 
\end{definition}

For \( (s,a)\in \states \times \act \) we denote by \( \mysucc(s,a) \) the set of successor states of \( s \) under \( a \), i.e., the support of \( \trans(s,a) \). 
We say that two states \( s,t\in \states \) are \emph{lookalikes} if they can produce the same observation, i.e., if \( \supp(\obsfunc(s))\cap \supp(\obsfunc(t)) \neq \emptyset\). 

\paragraph{Dynamics of CoPOMDPS.} A CoPOMDP \( \cpomdp \) evolves in discrete time steps. The situation at time \( \tstep \in \Nset \) is described by a pair of random variables, \( S_t \) and \( \reslevrv_t \), denoting the current state and the current resource level at time \( t \), respectively. The agent cannot observe the state directly, instead receiving an \emph{observation} \( \Ob_t \in \obs \) sampled according to the current state and \( \obsfunc \): \( \Ob_t \sim \obsfunc(S_t) \). The initial state is given by the \emph{initial distribution} \( \initdistr \in \distr(\states) \), i.e., \( S_0 \sim \initdistr \), while \( \reslevrv_0 \) will be typically fixed to a concrete \emph{initial resource level} \( \initlev \in\nint{\ca} \). 
Then, in every step \( \tstep \), the agent selects an action \( A_t \) from \( \act \) according to a \emph{policy} \( \pi \). The policy makes a (possibly randomized) decision based on the 
current \emph{history} \( H_t = \Ob_0 \reslevrv_0 A_0 \Ob_1\reslevrv_1 A_1\ldots \Ob_{t-1} \reslevrv_{t-1} A_{t-1}\Ob_{t}\reslevrv_t\), a finite alternating sequence of hitherto witnessed observations, resource levels, and actions; i.e., \( A_t \sim \pi(H_t) \). The new state \( S_{t+1} \) is then sampled according to the transition function \( \trans \): \( S_{t+1} \sim \delta(S_t,A_t) \).
To describe the resource dynamics, we define a \emph{resource update function} \( \resup \colon\nint{\ca}\times \states\times\act \rightarrow \nint{\ca} \) s.t. \( \resup(\ell,s,a) \) denotes the resource level after making a step from \( s \) using action \( a \), provided that the previous resource level was \( \ell \): if \( \ell= \bot \), then  \( \resup(\ell,s,a) = \bot\) and otherwise
\[
\resup(\ell,s,a) = \begin{cases}
\ell-\cons(s,a) & \text{if }  0 \leq \ell-\cons(s,a) \wedge s  \not\in \reloads \\
\ca - \cons(s,a) & \text{if } 0 \leq \ca-\cons(s,a) \wedge s \in \reloads \\
\bot & \text{otherwise}.
\end{cases}
\]

\noindent
We then put \( \reslevrv_{t+1} = \resup(\reslevrv_t,S_t,A_t) \). 

When referring to CoPOMDP dynamics, we use upper-case to denote random variables (\( S_t,\Ob_t,A_t \), etc.) and lower case for concrete values of variables, e.g., \( h_t \) for a concrete history of length \( t \). We denote by \( \hist \) the set of all possible histories in a given CoPOMDP, and by \( \len(h) \) a \emph{length} of a history \(h\), i.e., the number of action appearances in \( h \). We denote by \( \ell_h \) the last resource level of history \( h \).

%

\paragraph{Assumptions: Observable Resource and Zero Cycles.} Including resource levels in histories amounts to making the levels perfectly observable. This is a reasonable assumption, as we expect a resource-constrained agent to be equipped with a resource load sensor. As long as this sensor has at least some guaranteed accuracy \( \eps \), for an observed resource level \( L \) we can report \( \lfloor L - \eps \rfloor \) to the agent as a conservative estimate of the resource amount. We also assume that the agent can recognize whether it is in a reload state or not, i.e., a reload state is never a lookalike of a non-reload state. Finally, we assume that the CoPOMDP does not allow the agent to indefinitely postpone consuming a positive amount of a resource unless a goal state (see below) has been already reached. This is a technical assumption which does not impede applicability, since we typically model autonomous agents whose \emph{every} action consumes some resource.

\paragraph{Optimization in Goal CoPOMDPs.}

In what follows, we denote by \( \probm^\pi \) the probability measure induced over the trajectories of a CoPOMDP by a policy \( \pi \), and by \( \expv^\pi \) the corresponding expectation operator.

In a \emph{goal CoPOMDP,} we are given a set of \emph{goal states} \( \tgt \subseteq \states \). A policy \( \pi \) is a \emph{goal} policy if it reaches a goal state almost-surely, i.e., with probability 1:
\[ \probm^{\pi}(S_t \in \tgt \text { for some } t \in \Nset ) = 1, \]
and a \emph{positive-goal} policy if there exists \(\epsilon>0 \) such that it reaches a goal state with probability at least \(\epsilon \), i.e.:
\[ \probm^{\pi}(S_t \in \tgt \text { for some } t \in \Nset ) \geq \epsilon. \]
We assume that all goal states \( g \in \tgt \) are absorbing, i.e., \( \trans(g\mid g,a) = 1 \) for all \( a \in \act \); that the self-loop on \( g \) has a zero consumption; and that the agent can observe reaching a goal, i.e., no goal state is lookalike with a non-goal state. This captures the fact that reaching a goal ends the agent's interaction with the environment.

We study the problem of reaching a goal in a most efficient way while preventing resource exhaustion. To this end, we augment the CoPOMDP with a \emph{cost} function \( \cost \colon \states\times\act\rightarrow \Rset_{\geq 0} \), stipulating that the self-loops on goal states have zero costs (we can also allow negative costs for some state-action pairs,  as long as such pairs can appear only finitely often on each trajectory: these can be used, e.g., as one-time ``rewards'' for the agent reaching a goal state). Then the \emph{total cost} of a policy \( \pi \) is the quantity 
\[\tcost(\pi) = \expv^\pi\left[\sum_{t=0}^{\infty} \cost(S_t,A_t)\right].\] 


\paragraph{Resource-Constrained Goal Optimization.}

A policy \( \pi \) is \emph{safe} if it ensures that the resource is never exhausted; due to the discrete nature of CoPOMDPs, this is equivalent to requiring that the exhaustion probability is zero:
\[
\probm^\pi(\reslevrv_t = \bot \text{ for some } t\in \Nset) = 0.
\]

\noindent
In the \emph{resource-constrained goal optimization (RSGO)} problem we aim to find a policy \( \pi \) minimizing \( \tcost(\pi) \) subject to the constrain that \( \pi \) is a \emph{safe goal} policy.

\paragraph{Belief Supports for CoPOMDPs.}
\label{subsec:belsup}

When working with POMDPs, one often does not work directly with histories but with their suitable \emph{statistics}. 
The \emph{constraints} in the RSGO problem are \emph{qualitative}, and hence \emph{qualitative} statistics should be sufficient for satisfying resource safety. This motivates the use of \emph{belief supports}. 
For each history \( h \in \hist \), the belief support of \( h \) is the set \( \belsupf(h) \subseteq \states\) of all states in which the agent can be with a positive probability after observing history \( h \). 
The standard formal definition of a belief support~\cite{ChaCGK:2016:POMDP-belsup-goalcost}  needs to be extended to incorporate resource levels.
 This extension is somewhat technical and we defer it to the supplementary material. Similarly to standard belief supports, it holds that for a history \( h = \hat{h}a\ob\ell \) with prefix \( \hat{h} \), given  \( \belsupf(\hat{h}), a ,\ob, \ell  \), and last resource level of \( \hat{h} \), we can compute \( \belsupf(h) \) in quadratic time. Thus, the agent can efficiently update its belief support when making a step.

Given a belief support \( B \), state \( s \in B \), and any history \( \hat{h}a\ob\ell \) s.t. \( B = \belsupf{(\hat{h})} \) and \( \obsfunc(\ob| t)>0 \) for some \( t \in \mysucc(s, a) \), we say that the belief support \( \belsupf(\hat{h}a\ob\ell) \) is the \( s \)-successor of \( B \) under \( a \).  Slightly abusing the notation, we denote by \( \bsucc(\bels, a, s) \) the set of all possible \( s \)-successors of \( B \) under \( a \). We also denote \( \bsucc(\bels, a) = \bigcup_{s \in B} \bsucc(\bels, a, s) \). Given \( B \) and \( a \) (or \( s \)), \( \bsucc(\bels,a) \) and \( \bsucc(\bels,a,s) \) can be computed in polynomial time.

\section{Shielding for CoPOMDPs}



Informally, a shield is an algorithm which, in each step, \emph{disables} some actions (based on the available information) and thus prevents the agent from using them. In this paper, we formalize shields as follows:


\begin{definition}
A \emph{shield} is a function \( \shield \colon \hist\times \act \rightarrow\{\en,\dis\}\). We say that \( \shield \) \emph{enables} action \( a\in \act \)  in history \( h \) 
if \( \shield(h,a)=1 \), otherwise \( \shield \) \emph{disables} \( a \).

We say that policy \( \pi \) \emph{conforms} to \( \shield \) if it never selects a disabled action, i.e., if  \( \shield(h,a)=0 \) implies \( \pi(h)(a)=0 \), for all \( h\in \hist,a\in\act \).

A shield \( \shield \) is support-based if \( \shield(h,a) = \shield(h',a) \) for any action \(a\) and histories \( h,h' \) s.t. \( \belsupf(h)=\belsupf(h') \) and \( \ell_h = \ell_{h'} \). We treat support-based shields as objects of type \( 2^\states\times \nint{\ca}\times\act\rightarrow \{\en,\dis\}. \) A support-based shield \( \shield \) is \emph{succinct} if for every \( B\in 2^\states \) and every \( a \in \act \) there is a threshold \( \tau_{B,a} \) (possibly equal to \( \infty \)) such that \( \sigma(B, \ell, a) = 1 \Leftrightarrow \ell \geq \tau_{B,a}.\)
\end{definition}

Succinct shield can be represented by a table of size \( \mathcal{O}(2^{|\states|}\cdot |\act|) \) storing the values \(\tau_{B,a}\). Hence, we treat them as functions of the type \( 2^\states \times \act \rightarrow \nint{\ca} \cup\{\infty\}\).


\paragraph{RSGO Problem and Shields.}

We aim to construct shields taking care of the qualitative constraints in the RSGO problem.
To this end, our shields need to prevent two types of events: directly exhausting the resource, and getting into situation where a goal state cannot be reached almost surely without risking resource exhaustion. The latter condition can be formalized via the notion of a trap:

\begin{definition}
A tuple \( (\bels,\ell) \), where \( \bels\subseteq 2^\states \), is a \emph{trap} if setting the initial distribution \( \initdistr \) to the uniform distribution over \( \bels \) and the initial resource level \( L_0 \) to \( \ell \) yields a CoPOMDP in which no safe goal policy exists.
\end{definition}

Moreover, an ideal shield should not over-restrict the agent, i.e., it should allow any behavior that does not produce some of the two events above. The following definition summarizes our requirement on shields.

\begin{definition}
A shield \( \shield \) is \emph{exact} if it has the following three properties:
\begin{itemize}
\item every policy conforming to \( \shield \) is safe; and
\item each policy \( \pi \) conforming to \( \shield \) avoids traps, i.e., satisfies 
\[
\probm^\pi((\belsupf(H_t),\reslevrv_t) \text{ is a trap for some } t\in \Nset) = 0\text{; and}
\]
\item \emph{every} safe goal policy conforms to \( \shield \).
\end{itemize}
\end{definition} 

In Section~\ref{sec:fipomdp}, we show how to employ heuristic search augmented by an exact shield to solve a finite-horizon approximation of the RSGO problem. Before that, we present an algorithm which, given a CoPOMDP \( \cpomdp \), computes an exact shield  for \( \cpomdp \) and checks whether the RSGO problem for \( \cpomdp \) has a feasible solution.



\section{Computing Exact Shields}
We construct exact shields by computing \emph{threshold levels.} These indicate, for each situation, what is the minimal resource level with which the agent can still manage to almost-surely reach a goal state without exhausting the resource. 

\begin{definition}
\label{def:thr-level}
Let \( \cpomdp \) be a CoPOMDP and \( h \in \hist \) be its history. The \emph{threshold level} and \emph{positive-threshold level} of \( h \) in \( \cpomdp \) are the quantities \( \tlev^{\cpomdp}(h) \) and \(\tlevPR^{\cpomdp}(h) \), respectively, equal to the smallest resource level \( \ell \in \nint{\ca} \) that has the following property: if we set the initial distribution \( \initdistr \) of \( \cpomdp \) to the uniform distribution over the set \( \belsupf(h) \) and the initial resource level \( \reslevrv_0 \) to \( \ell \), then the resulting CoPOMDP has a:
\begin{itemize}
\item safe goal policy in the case of \( \tlev^\cpomdp(h) \); 
\item safe positive-goal policy in the case of \( \tlevPR^\cpomdp(h) \).
\end{itemize} 
In the case such a resource level does not exist at all, we set the respective value to \( \infty \). We omit the superscript if \( \cpomdp \) is clear from the context.
\end{definition}

The next lemma connects threshold levels to shields.\footnote{Full proofs, missing constructions, and benchmark details are in the supplementary material.}

\begin{lemma}
\label{lem:thr-level-shielding}
Let \( \shield \) be an exact shield. Then for each \( h\in \hist \) and each \( a \in \act \) we have that \( \shield(h,a) = \en \) if and only if \( \ell_h \) is greater than or equal to the smallest number \( \tau \) s.t. for any \( s \in \belsupf(h) \) and any valid history of the form \( ha\ob\ell \) s.t. \( \obsfunc(\ob | t)>0  \) for some \( t \in \mysucc(s, a) \), it holds that \( \resup(\tau, s, a) \geq \tlev(ha\ob\ell) \).
\end{lemma}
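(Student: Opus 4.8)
The plan is to reduce the claimed characterization to a purely \emph{one-step}, belief-support-local condition on the action $a$, and then read off the threshold form by monotonicity of the resource update $\resup$. Two reformulations drive the argument. First, directly from the definitions, a pair $(\bels,\ell)$ is a trap if and only if $\ell<\tlev(h)$ for any $h$ with $\belsupf(h)=\bels$; equivalently, $(\belsupf(ha\ob\ell),\ell)$ is \emph{not} a trap iff $\ell\geq\tlev(ha\ob\ell)$. Second — and this is what makes belief \emph{supports} the right statistic — whether a policy is safe and goal (both probability-$0$/probability-$1$ events) depends only on the support of the current belief, not on the actual probabilities; hence trap-ness, although defined via the uniform belief over $\bels$, transfers to every belief supported on $\bels$.

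With these in hand, I would first establish the one-step equivalence
\[
\shield(h,a)=\en \iff \text{playing } a \text{ at } (\belsupf(h),\ell_h) \text{ risks neither exhaustion nor a trap,}
\]
where the right-hand side means: for every $s\in\belsupf(h)$ and every valid successor $ha\ob\ell$ consistent with $s$ (i.e.\ $\ell=\resup(\ell_h,s,a)$ and $\obsfunc(\ob\mid t)>0$ for some $t\in\mysucc(s,a)$), we have $\resup(\ell_h,s,a)\neq\bot$ and $(\belsupf(ha\ob\ell),\ell)$ is not a trap. For ``$\Leftarrow$'' I would use the third exactness property: if the right-hand side holds then $(\belsupf(h),\ell_h)$ is itself a non-trap, and the policy that plays $a$ and thereafter follows, from each non-trap successor, a witnessing safe goal policy (which exists by the trap characterization and support-invariance) is a safe goal policy selecting $a$ at $h$; since every safe goal policy conforms, $\shield(h,a)=\en$. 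For ``$\Rightarrow$'' I would argue contrapositively using the safety and trap-avoidance properties: if some $s\in\belsupf(h)$ makes the action exhaust the resource, or leads with positive probability to a trap successor, then a conforming policy reaching $h$ and playing $a$ would violate safety or trap-avoidance, so an exact shield must disable $a$.

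It then remains to rewrite the one-step condition in the stated threshold form. Evaluated at the actual level, $\resup(\ell_h,s,a)=\ell$, so ``no exhaustion and non-trap successor'' is exactly $\resup(\ell_h,s,a)\geq\tlev(ha\ob\ell)$ for all consistent $(s,\ob,\ell)$ (the inequality also rules out $\resup=\bot$, since $\bot$ lies below every threshold). Since the successor histories $ha\ob\ell$, and hence the targets $\tlev(ha\ob\ell)$, are fixed by the actual $\ell_h$, and since $\resup(\cdot,s,a)$ is nondecreasing in its first argument, the set of $\tau$ satisfying $\resup(\tau,s,a)\geq\tlev(ha\ob\ell)$ for all such $(s,\ob,\ell)$ is upward closed; its minimum is precisely the threshold $\tau$ of the lemma, and the one-step condition holds iff $\ell_h\geq\tau$. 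Combining with the one-step equivalence yields $\shield(h,a)=\en\iff\ell_h\geq\tau$.

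The main obstacle is the one-step equivalence, specifically the two policy constructions. The ``$\Leftarrow$'' direction requires stitching the local witnessing policies at the successors into a single global safe goal policy that selects $a$ at $h$, and the ``$\Rightarrow$'' direction requires exhibiting a conforming policy that actually realizes the bad event at $h$; both hinge on reachability of $h$ and must be phrased carefully (restricting attention to histories that can arise under the shield, and invoking feasibility of the RSGO instance), since on histories unreachable under any conforming policy the exactness constraints leave $\shield$ underdetermined. Invoking support-invariance at exactly the right places — to pass between the uniform-belief definition of a trap and the actual belief carried by $h$ — is the other delicate point.
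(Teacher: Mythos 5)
Your proposal is correct and takes essentially the same route as the paper's own proof: the ``enabled $\Rightarrow$ threshold'' direction uses exactness (safety and trap-avoidance) to conclude each successor $(\belsupf(ha\ob\ell),\ell)$ is a non-trap and hence $\resup(\ell_h,s,a)\geq\tlev(ha\ob\ell)$, and the converse direction stitches witnessing safe goal policies at the successors into a single safe goal policy that plays $a$ at $h$, contradicting the requirement that every safe goal policy conforms to $\shield$. The ingredients you make explicit --- the trap--$\tlev$ correspondence, support-invariance of the qualitative objectives, and monotonicity of $\resup$ to extract the threshold $\tau$ --- are left implicit in the paper but play the same role there.
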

\noindent
I.e., an action can be enabled iff all possible outcomes of that action lead to a situation where the current resource level is at least the threshold level for that situation. Note that Lemma~\ref{lem:thr-level-shielding} entails the existence of a unique (up to values in histories that already exhausted the resource) exact shield. Moreover, since threshold levels of a history \( h \) only depend on \( \belsupf(h) \), this exact shield is support-based and succinct.

\subsection{Computing Threshold Levels in POMDPs}

We build on the fact, established in the previous work, that threshold levels can be efficiently computed for \emph{perfectly observable} consumption Markov decision processes, or CoMDPs. In CoMDPs, the agent can perfectly observe the current state \( S_t \) and the history of past states. For notational convenience, we treat CoMDPs as a special case of CoPOMDPs in which \( \obs = \states \) and each state always produces itself as its unique observation: we thus omit the observations and observation function from the description of a CoMDP.

\begin{theorem}[\cite{BlaBNOTT:2020:cons-mdp-buchi}]
\label{thm:cmdp-tvals-algo}
In a CoMDP, the values \( \tlev(h) \) and \(\tlevPR(h)\) depend only on the last observation (i.e., state) of \( h \). Moreover, for each state \( s \), the values \( \tlev(s) \) and \(\tlevPR(s)\) -- the common values of \( \tlev(h) \) and \(\tlevPR(h)\), respectively, for all \( h\)'s whose last state equals to \( s \) --  can be computed in polynomial time.
\end{theorem}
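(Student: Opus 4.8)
The plan is to prove the two assertions in turn: that the threshold levels of a history depend only on its last state, and that they are computable in polynomial time.

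For the first assertion I would use perfect observability. Since a CoMDP is the special case $\obs=\states$ in which every state emits itself as its unique observation, the belief support $\belsupf(h)$ of any history $h$ whose last state is $s$ is the singleton $\{s\}$: after witnessing the state sequence the agent is certain of its current state. By Definition~\ref{def:thr-level}, $\tlev(h)$ is the least $\ell\in\nint{\ca}$ for which the CoMDP started from the uniform distribution over $\belsupf(h)=\{s\}$ (i.e.\ started deterministically in $s$) with initial level $\ell$ admits a safe goal policy. By the Markov property of MDPs the feasibility set $\{\ell : \text{a safe goal policy exists from } (s,\ell)\}$ depends only on $s$, not on how $h$ reached $s$; hence so does its minimum, giving $\tlev(h)=\tlev(s)$, and likewise $\tlevPR(h)=\tlevPR(s)$ using positive-goal policies. (A short coupling argument — running a witnessing policy under a larger initial level keeps the resource level pointwise dominating until the next reload, after which both runs sit at capacity $\ca$ — shows this set is upward closed, so the minimum is well behaved.)

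For the second assertion the substance lies in the resource-safety reachability computations, which I would organize in two layers. The safety layer computes, for each state $s$, the least level guaranteeing that the agent can survive forever (never reach $\bot$); surviving is equivalent to being able, from every reachable state, to reach a reload state before exhausting the resource, using only reload states that are mutually reachable within capacity. I would compute this via a least fixed point of a Bellman-style operator that, for each $s$, minimizes over actions the action's consumption plus the worst-case (over nature's successors, since safety must hold on every positive-probability path) residual level needed to reach a reload, with reload states as base cases, wrapped in an outer loop that prunes reload states from which no surviving reload is reachable within $\ca$. On top of this, the goal layer computes the least level that almost-surely reaches $\tgt$ while staying safe, combining the safe sublevels with an almost-sure attractor computation on the surviving sub-CoMDP; $\tlevPR$ is obtained by replacing the almost-sure attractor with a positive-reachability one.

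The crucial — and delicate — point is complexity. Each fixed point here operates directly on resource levels as numbers in $\nint{\ca}$ and stabilizes after a number of iterations bounded by a polynomial in $|\states|$ and $|\reloads|$, independent of $\ca$, because each stabilizing step shortens a witnessing path whose length is at most $|\states|$; every iteration performs arithmetic on numbers of magnitude at most $\ca$, i.e.\ of bit-size $O(\log \ca)$. This yields running time polynomial in the size of the CoMDP, the bit-size of $\ca$ included. I expect the main obstacle to be the simultaneous enforcement of the quantitative safety constraint and the qualitative almost-sure goal objective along every trajectory: one cannot merely compose a survivability computation with a reachability computation, since the agent may be forced to reload several times en route to $\tgt$ and the almost-sure requirement restricts how nature's successor choices may be resolved at each step. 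Interleaving these two requirements while keeping the iteration count independent of $\ca$ is exactly the nontrivial content established in~\cite{BlaBNOTT:2020:cons-mdp-buchi}.
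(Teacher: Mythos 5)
A preliminary point: the paper itself contains no proof of this statement. Theorem~\ref{thm:cmdp-tvals-algo} is imported verbatim from \cite{BlaBNOTT:2020:cons-mdp-buchi}, and the appendix says explicitly that it ``was proved in the previous work, which we cite.'' So your proposal can only be judged against the construction in that prior work (and against the way the present paper reuses it in Algorithm~\ref{algo:shield}).

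Measured against that, your sketch has the right overall shape but a genuine gap in the goal layer. The first claim (dependence on the last state only) is fine --- indeed it is nearly definitional, since in a CoMDP every belief support is a singleton, so Definition~\ref{def:thr-level} already makes \( \tlev(h) \) and \( \tlevPR(h) \) functions of the last state; your Markov-property and upward-closedness remarks are correct. Your safety layer also matches the cited construction: a Bellman-style least fixed point over level-valued functions (minimum over actions, worst case over successors, reloads as base cases), wrapped in an outer loop that prunes unusable reloads, with iteration counts bounded polynomially in \( |\states| \) and arithmetic on numbers of bit-size \( O(\log \ca) \). The problem is the step ``combining the safe sublevels with an almost-sure attractor computation on the surviving sub-CoMDP.'' Whether the agent can almost-surely reach \( \tgt \) while staying safe depends on resource levels, not only on which states survive --- a goal can be graph-reachable inside the surviving sub-CoMDP yet unreachable without exceeding \( \ca \) between reloads --- and one cannot instead run the attractor on the configuration space \( \states \times \nint{\ca} \), which is exponential in the bit-size of \( \ca \). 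The cited work resolves this by taking safe \emph{positive} reachability as the primitive (a fixed point over level functions in which goal states may switch to their survival values) and deriving the almost-sure value by iteratively deleting reload states whose positive-reachability value is \( \infty \) and recomputing until stabilization; this is exactly the structure the present paper lifts to token CoMDPs in Algorithm~\ref{algo:shield}. Your ordering inverts this (almost-sure attractor as primitive, positive reachability as a variant), and your closing paragraph then defers ``the nontrivial content'' to \cite{BlaBNOTT:2020:cons-mdp-buchi} itself --- which makes the proposal circular precisely at the step where the theorem's entire difficulty lies.
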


The main idea of our approach is to turn a given CoPOMDP \( \pomdp \) into a \emph{finite} CoMDP that captures the aspects of CoPOMDP dynamics pertaining to threshold levels. Below, we present a construction of such a CoMDP.

\paragraph{Consumption Consistency.} 

 The construction assumes that the input CoPOMDP is \( \emph{consistent}\), i.e., \( \cons(s,a)=\cons(t,a) \) for each pair of lookalike states \( s,t \). Any CoPOMDP can be easily transformed into consistent one by splitting each probabilistic transition \( \delta(s,a)(t) \) with a dummy state \( t_{s,a} \) in which the consumption depending on \( s\) and \( a \) takes place. (The state \( t_{s,a} \) emits the value \( \cons(s,a) \) as its observation, which only gives the agent information that he is guaranteed to get in the next step anyway). 

\begin{lemma}
Given a CoPOMDP \( \cpomdp \) one can construct, in time linear in the size of \( \cpomdp \), an equivalent (in terms of policies and their costs) consistent CoPOMDP \( \cpomdp' \). 
\end{lemma}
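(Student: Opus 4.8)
The plan is to exhibit an explicit construction transforming $\cpomdp$ into a consistent CoPOMDP $\cpomdp'$ by splitting each probabilistic transition through a freshly introduced intermediate (``dummy'') state, as suggested in the paragraph preceding the statement. Concretely, for every state-action pair $(s,a)$ I would introduce, for each successor $t \in \mysucc(s,a)$, a new intermediate state $t_{s,a}$. The original transition $\delta(s,a)$ is then redirected so that from $s$ under $a$ the process moves deterministically in distribution to the set $\{t_{s,a} : t \in \mysucc(s,a)\}$ with the \emph{same} probabilities $\delta(t \mid s,a)$, and from $t_{s,a}$ a deterministic (zero-information, zero-choice) step leads to the original $t$. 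The crucial accounting move is to shift the resource consumption: the step out of $s$ under $a$ is made free, i.e.\ $\cons'(s,a)=0$, while the step out of $t_{s,a}$ carries the original consumption, $\cons'(t_{s,a},a')=\cons(s,a)$ for every action $a'$. Since the only states that are lookalikes of $t_{s,a}$ are other dummy states $t'_{s',a'}$ sharing the same observation $\cons(s,a)$, and these all carry the identical consumption value by construction, consistency $\cons'(s,a)=\cons'(t,a)$ across lookalikes holds. I must also confirm this plays well with reloads: a dummy state $t_{s,a}$ should be a non-reload state, and to mimic the original reload semantics correctly the consumption that was charged when leaving a reload state $s$ must be reproduced in the reload-adjusted update; I would either keep reloads on the original states (with the outgoing edge now free) or, more carefully, place the reload flag and the consumption on the appropriate endpoint so that $\resup'$ on the two-step gadget reproduces $\resup$ on the original one-step edge for every incoming level $\ell$.

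Having fixed the construction, the proof has two obligations: \emph{correctness of the transformation} (it is linear-sized and yields a consistent CoPOMDP) and \emph{equivalence in terms of policies and costs}. The size bound is immediate: the number of added dummy states is $\sum_{(s,a)} |\mysucc(s,a)|$, which is exactly the number of edges in the transition representation of $\cpomdp$, hence the construction is linear in $|\cpomdp|$. For equivalence I would set up a correspondence between trajectories of $\cpomdp$ and $\cpomdp'$ in which each original step $s \xrightarrow{a} t$ is simulated by the two-step segment $s \xrightarrow{a} t_{s,a} \to t$. Because the intermediate state offers no genuine choice (its outgoing action is forced and deterministic) and emits only the observation $\cons(s,a)$ which, as noted in the excerpt, reveals nothing the agent would not learn in the next step anyway, every policy $\pi$ of $\cpomdp$ lifts to a policy $\pi'$ of $\cpomdp'$ and vice versa, preserving the induced probability measures $\probm^{\pi}$. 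I would then verify that the total cost is preserved by assigning the original cost $\cost(s,a)$ to the edge out of $s$ and zero cost to the forced edge out of $t_{s,a}$ (or splitting it however is convenient), so that $\tcost'(\pi') = \tcost(\pi)$, and likewise that goal states, the safety condition $\reslevrv_t \neq \bot$, and almost-sure reachability are all preserved under the trajectory correspondence.

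The main obstacle I anticipate is \emph{getting the resource-update bookkeeping exactly right around reload states}, because $\resup$ treats reload and non-reload states asymmetrically: a reload state replenishes to $\ca$ and then subtracts $\cons(s,a)$, whereas splitting the edge into two steps risks applying the reload on the wrong endpoint or double-counting/omitting a consumption. The fix is to verify the identity $\resup'(\resup'(\ell,s,a),t_{s,a},a') = \resup(\ell,s,a)$ for all $\ell \in \nint{\ca}$, $t \in \mysucc(s,a)$, handling the three cases of the $\resup$ definition (sufficient level, reload, and exhaustion to $\bot$) separately, with the free outgoing edge from $s$ ensuring the reload is computed against the correct pre-consumption level. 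A secondary subtlety is observability: I must ensure the newly added dummy states neither become lookalikes of goal states nor of reload states in a way that violates the standing assumptions, which is handled by having each $t_{s,a}$ emit the dedicated numeric observation $\cons(s,a)$, keeping the dummy-state observations disjoint from those of the original states. Once these case checks are discharged, the equivalence of costs and of the safe-goal property follows routinely from the step-by-step trajectory correspondence.
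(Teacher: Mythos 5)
Your proposal is correct and takes essentially the same approach as the paper: the paper likewise splits each transition with a dummy state \(t_{s,a}\) that emits the value \(\cons(s,a)\) as its observation (revealing nothing the agent could not compute anyway), charges the original consumption on the step out of the dummy while making the step out of \(s\) free, keeps the dummies as non-reloads so that the reload-then-consume semantics of \(\resup\) is reproduced, and argues linearity and policy/cost equivalence via the obvious two-step trajectory correspondence. The only (inessential) difference is that the paper uses one dummy per pair \((s,a)\) placed \emph{before} the probabilistic branching, whereas you introduce one dummy per transition edge placed \emph{after} it.
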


\paragraph{Token CoMDPs.}
A \emph{token CoMDP} is our generalization of a ``POMDP to MDP'' construction used in~\cite{BaiBG:2008:POMDP-asreach-dec} to prove decidability of almost-sure reachability in (standard) POMDPs. States of the token CoMDP correspond to tuples \( (B,\alpha) \), where \( B \) is a belief support in the original CoPOMDP and \( \alpha \in B \) is a ``token'', signifying the agent's guess of the current state (\( \alpha \) can also equal a special symbol \( \varepsilon \), representing an invalid guess). Formally, given a consistent CoPOMDP \( \cpomdp = (\states,\act,\trans,\obs,\obsmap,\cons,\reloads,\ca)\) we construct a \emph{token} CoMDP \( \tok\cpomdp = (\tok\states,\act,\tok\trans,\tok\cons,\tok\reloads,\ca) \) such that:
\begin{itemize}
\item \( \tok\states \) contains all tuples \( (\bels,\guess) \) s.t. \( \bels\subseteq 2^\states \) is a belief support in \( \cpomdp \) and \( \guess \) is either an element of \( \bels \) or a special symbol \( \eguess \) (“empty guess");
\item for each \( (\bels,\guess) \in \tok\states \) and each \( a\in \act \) we have that \( \tok\trans((\bels,\guess),a) \) is a uniform distribution over the set \( \tok\mysucc((\bels,\alpha),a) \) defined as follows:
\begin{itemize}
\item if \( \alpha \neq \varepsilon \), we add to \( \tok\mysucc((\bels,\alpha),a) \) all tuples of the form \( (\bels',\alpha') \), where \( \bels' \in \mysucc(\bels,a)\) and \( \alpha' \) satisfies one of the following: either \( \alpha'\in\mysucc(\alpha,a) \cap \bels' \) or \( \mysucc(\alpha,a) \cap \bels' = \emptyset \) and \( \alpha' = \varepsilon \);
\item if \( \alpha = \varepsilon \), we add to \( \tok\mysucc((\bels,\alpha),a) \) all tuples of the form \( (\bels',\varepsilon) \), where \( \bels' \in \mysucc(\bels,a)\)
\end{itemize}
\item for each \( (\bels,\alpha)\in \tok\states \) and each \( a \in \act \) we put \( \tok\cons((\bels,\alpha),a) = \cons(s,a) \) where \( s \) is an arbitrary element of \( \bels \) (this definition is correct since \( \cpomdp \) is consistent).
\item \( \tok\reloads \) contains those tuples \( (\bels,\alpha)\in \tok\states \) such that \( \bels \subseteq \reloads \).
\end{itemize}

\paragraph{Pruning Token CoMDPs and Computing Exact Shields.}

It is \emph{not correct} to directly apply the algorithm of~\cite{BlaBNOTT:2020:cons-mdp-buchi} to the token MDP, as the following example shows.

\begin{example}
Consider the CoPOMDP \( \cpomdp \) pictured in the  left part of Figure~\ref{fig-gadgets}, with the corresponding   token CoMDP in the right (the position of the token is given by the hat symbol, e.g., \( \{\hat{p},q\} \) represents the state \( (\{p,q\},p) \)).
There exists no safe goal policy from \(s \) in \(\cpomdp \), since there is always a chance of getting stuck in \( t \): after one step, the agent cannot know whether it is in \( p \) or \( q \) and hence whether to choose action \( a \) or \( b \).  But in the token CoMDP such a policy clearly exists. 
\end{example}

\begin{figure}[t]\centering
	\begin{tikzpicture}[scale=.85, every node/.style={scale=0.85}, x=2.1cm, y=2.1cm, font=\footnotesize]
		\newcommand{\colortikz}{blue}
		\begin{scope}[shift={(0,0)}]
			\node[state] (s) at (0,0) {\(s\)};
			\draw [rounded corners,draw,dotted, very thick] (.25,-.75)  rectangle (.75,.85); 
			\draw [rounded corners,draw,dotted, very thick] (1,-.75)  rectangle (1.5,.85); 
			\node[state] (p) at (0.5,0.5) {\(p\)};
			\node[state] (q) at (0.5,-0.5) {\(q\)};
			\node[state,accepting] (r) at (1.25,0.5) {\(r\)};
			
			\node[state] (g) at (2,0.5) {\(g\)};
			\node[state,accepting] (t) at (1.25,-0.5) {\(t\)};
			\draw [tran] (s) to  (p); 
			\draw [tran] (s) to (q);
			\draw [tran,loop above] (t) to  (t);
			\draw [tran,loop above] (r) to  (r);
			\draw [tran] (p) to  (r);
			\draw [tran] (q) to  (t);
			\draw [tran] (r) to  (g);
			\draw [tran, dashed] (p) to  (t);
			\draw [tran, dashed] (q) to  (r);
			\draw [tran,loop above] (g) to  (g);
			
		\end{scope}
		\begin{scope}[shift={(2.3,0)}]
			\node[state] (s) at (0,0) {\(\{\widehat{s}\}\)};
			\node[state] (p) at (0.5,0.5) {\(\{\widehat{p},q\}\)};
			\node[state] (q) at (0.5,-0.5) {\(\{p,\widehat{q}\}\)};
			\node[state,accepting] (r) at (1.25,0.5) {\(\{\widehat{r},t\}\)};
			
			\node[state] (g) at (2,0.5) {\(\{\widehat{g} \}\)};
			\node[state,accepting] (t) at (1.25,-0.5) {\(\{r,\widehat{t} \}\)};
			\draw [tran] (s) to  (p); 
			\draw [tran] (s) to (q);
			\draw [tran,loop above] (t) to  (t);
			\draw [tran,loop above] (r) to  (r);
			\draw [tran] (p) to  (r);
			\draw [tran] (q) to  (t);
			\draw [tran] (r) to  (g);
			\draw [tran, dashed] (p) to  (t);
			\draw [tran, dashed] (q) to  (r);
			\draw [tran,loop above] (g) to  (g);
		\end{scope}
	\end{tikzpicture}
	\caption{A CoPOMDP (left) and it's corresponding token CoMDP (right). States \(r,t \) (double circles) are reloads, and \(g \) is a goal. All actions consume a single unit of the resource, apart from the self-loop on \( g \), which has zero consumption. There are two actions, \( a,b \). Solid edges represent transitions under action \(a \) (with uniform branching in \( r \)) and dashed edges represent action \(b \) (if both actions behave in the same way in a given state, only the \( a \)-edges are pictured). States enclosed in dotted rectangles are indistinguishable, i.e., always emit the same common observation. 
	}
	\label{fig-gadgets}
\end{figure}
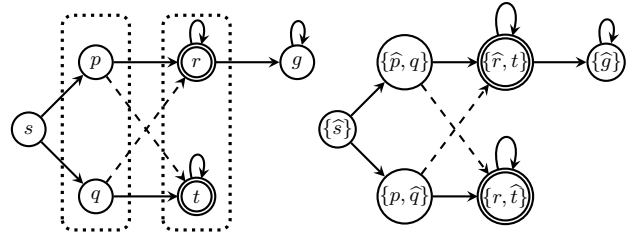

 Instead, we iteratively “prune" the token CoMDP \(\tok\cpomdp = (\tok\states,\act,\tok\trans,\tok\cons,\tok\reloads,\ca) \) by iteratively removing  the reloading property from all pairs \( (\bels,\alpha) \) that correspond to a trap, until reaching a fixed point. The  resulting CoMDP can be used to compute threshold values in the original CoPOMDP and thus also exact shield for \( \cpomdp \). The process is summarized in Algorithm~\ref{algo:shield}. Lines \ref{algline-rep-start}--\ref{algline-rep-end} compute the threshold levels, the remaining lines extract the shield. The latter part uses an ``inverse'' \( \resupinv \colon \nint{\ca} \times \states \times \act \rightarrow \nint{\ca} \cup \{\infty\} \) of the function \( \resup \) s.t. \( \resupinv(\ell', s, a) \) can be interpreted as the minimal amount of resource we need to have in \( s \) so that after playing \( a \) we have at least \( \ell' \) units. 
 Formally, 
 if \( \ell'=\bot \), also \( \resupinv(\ell', s, a)  = \bot\), irrespective of \( s, a \). Otherwise, 
 \[
 \resupinv(\ell', s, a) = \begin{dcases}
  \ell' + \cons(s, a) & \ell' \leq \ca - \cons(s,a) \wedge s \not \in \reloads,\\
 0 & \ell'\leq \ca - \cons(s,a) \wedge s \in \reloads,\\
 \infty & \ell' > \ca - \cons(s,a) .\\
 \end{dcases}
 \]

\begin{algorithm}
	\KwIn{consistent CoPOMDP \( \cpomdp \)}
	\KwOut{succinct support-based exact shield \( \shield \) for \( \cpomdp \)}
	compute the token CoMDP \( \tok\cpomdp  = (\tok\states,\act,\tok\trans,\tok\cons,\tok\reloads,\ca) \)\\
	\Repeat{\( \toremove = \emptyset \)\label{algline-rep-end}}{\label{algline-rep-start}
		\( \toremove \leftarrow \emptyset \);\\
		compute \( \tlevPR^{\tok\cpomdp} \)\label{aldline-compute-cmdp} \tcc*{Theorem~\ref{thm:cmdp-tvals-algo}}
		\ForEach{\( (\bels,\alpha) \in\tok\reloads\)}{ \If{\( \tlevPR^{\tok\cpomdp}(\bels,\alpha) = \infty \)}{
				\ForEach{\( \beta \in \bels \cup\{\varepsilon\} \)}{\( \toremove \leftarrow \toremove\cup\{(\bels,\beta)\} \)}
		}}
		\( \tok\cpomdp  \leftarrow  (\tok\states,\act,\tok\trans,\tok\cons,\tok\reloads\setminus\toremove,\ca)\)\label{algline-ct-rel-removed}
	} 
	\ForEach{\( \bels \subseteq 2^\states, a \in \act\) }{
	    \( \textit{MAX} \leftarrow -\infty \);\\
		\ForEach{\( s \in \bels \)}
		{\( \textit{SMAX} \leftarrow -\infty \);\\
		\ForEach{ \( \bels' \in \bsucc(\bels, a, s)\)}{
			\( \alpha'\leftarrow \text{ any elem. of \( \bels' \)}\);\\
			\( \textit{SMAX} \leftarrow \max (\textit{SMAX}, \tlevPR^{\tok\cpomdp}(\bels',\alpha') ) \);\\}
		\( \textit{MAX} \leftarrow \max (\textit{MAX}, \resupinv(\textit{SMAX}, s, a )) \)
		
	}
	\( \shield(\bels, a) \leftarrow \textit{MAX} \)
	}
	\Return{\( \shield \)}
	\caption{Computing exact shields.}
	\label{algo:shield}
\end{algorithm}

\begin{theorem}\label{thm:shield-al-correct}
After the repeat cycle in Algorithm~\ref{algo:shield} finishes, for any \( \bels \subseteq 2^\states \) and \( \alpha,\beta \in \bels \) it holds \( \tlevPR^{\tok\cpomdp}(\bels,\alpha) = \tlevPR^{\tok\cpomdp}(\bels,\beta) \). Moreover, the
computed \(\shield \) is the unique (support-based succinct) exact shield for \( \cpomdp \). There is a safe goal policy (i.e., the RSGO problem admits a feasible solution) iff \( \shield \) enables at least one action for the history of length~0.
The algorithm runs in time \( \mathcal{O}(2^\states\cdot\mathit{poly}(||\cpomdp||)) \), where \( ||\cpomdp|| \) denotes the encoding size of \( \cpomdp \) (with all integers encoded in binary).
\end{theorem}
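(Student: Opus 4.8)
The plan is to reduce the whole theorem to a single \emph{central equivalence}: at the fixed point of the \textbf{Repeat} loop, for every belief support $\bels\subseteq\states$ and every real token $\guess\in\bels$,
\[
\tlevPR^{\tok\cpomdp}(\bels,\guess)=\tlev^{\cpomdp}(\bels),
\]
where $\tlev^{\cpomdp}(\bels)$ is the almost-sure threshold level of $\cpomdp$ (which by Definition~\ref{def:thr-level} depends only on the belief support) and the goal set of $\tok\cpomdp$ is taken to be the \emph{token-based} set $\{(\bels,\guess):\guess\in\tgt\}$. This immediately gives the first sentence of the theorem, since the right-hand side does not mention the token; note that the empty guess $\eguess$ is deliberately excluded, because from $(\bels,\eguess)$ the token stays $\eguess$ forever and can never enter $\tgt$, so $\tlevPR^{\tok\cpomdp}(\bels,\eguess)=\infty$. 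Using a token-based rather than support-based goal set is exactly what the Example and Figure~\ref{fig-gadgets} force: with a support-based goal the raw token CoMDP would declare $(\{r,t\},t)$ winning and overestimate what the agent can guarantee. The equivalence is the consumption-aware analogue of the belief-support/token characterization of almost-sure reachability of~\cite{BaiBG:2008:POMDP-asreach-dec}, now intertwined with resource safety.

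First I would pin down what the pruning computes. By induction on the loop iterations I would prove the invariant that line~\ref{algline-ct-rel-removed} has stripped the reload flag from $(\bels,\cdot)$ exactly when $\bels$ is a \emph{trap}, i.e.\ admits no safe goal policy even from full battery. Monotonicity makes this well-founded: deleting reload states from a CoMDP can only raise every $\tlevPR$, so a flagged pair stays flagged, the values $\tlevPR^{\tok\cpomdp}$ are nondecreasing across iterations, and since each non-terminal iteration removes at least one reload belief support (of which there are at most $2^{|\reloads|}$) the loop terminates. Each recomputation in line~\ref{aldline-compute-cmdp} is exact by Theorem~\ref{thm:cmdp-tvals-algo}.

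The core is a two-way strategy transfer establishing the central equivalence. For ``$\ge$'', I take a safe almost-sure goal policy of $\cpomdp$ from uniform $\bels$ with initial level $\ell$, run it on the belief-support component, and let the token ride one positive-probability state-path that reaches $\tgt$ (such a path exists by almost-sure reachability and stays valid until $\tgt$); the policy cannot visit a trap belief support---otherwise its continuation would witness a safe goal policy from that trap---so it only ever reloads at pairs still carrying the flag, and safety transfers verbatim. The hard direction, ``$\le$'', is the main obstacle: from safe positive-goal token policies I must manufacture a \emph{support-based} policy of $\cpomdp$ reaching $\tgt$ \emph{almost surely}. Here I would use a round-robin construction in the spirit of~\cite{BaiBG:2008:POMDP-asreach-dec}: cycling through the possible real tokens of the current belief support, the agent repeatedly performs the positive-goal maneuver for each candidate state, invoking the token-independence of the thresholds to ensure that every return restores at least $\tlevPR^{\tok\cpomdp}$ units so that each attempt is both feasible and safe; each actual state is then driven into $\tgt$ with probability bounded away from $0$ infinitely often, and Borel--Cantelli upgrades this to almost-sure reaching. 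Making the ``restore enough resource, then retry'' loop precise while respecting the intermediate-value resource constraint is the delicate point.

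Granting the equivalence, the rest is bookkeeping. The extraction block of Algorithm~\ref{algo:shield} computes $\shield(\bels,a)=\max_{s\in\bels}\resupinv(\max_{\bels'\in\bsucc(\bels,a,s)}\tlevPR^{\tok\cpomdp}(\bels',\cdot),s,a)$; since $\resupinv(\cdot,s,a)$ is monotone it commutes with the inner maximum, and because $\resup(\tau,s,a)\ge\ell'\Leftrightarrow\tau\ge\resupinv(\ell',s,a)$, replacing $\tlevPR^{\tok\cpomdp}(\bels',\cdot)$ by $\tlev^{\cpomdp}(\bels')$ (central equivalence) and using that $\tlev(\hat h a\ob\ell)$ depends only on $\belsupf(\hat h a\ob\ell)$ shows this value is exactly the threshold $\tau$ of Lemma~\ref{lem:thr-level-shielding}. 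Hence $\shield$ coincides with the unique support-based succinct exact shield characterized there. For feasibility, the length-$0$ history has belief support $\supp(\initdistr)$ (refined by the initial observation) and level $\initlev$, and a safe goal policy exists iff $\initlev\ge\tlev^{\cpomdp}(\supp(\initdistr))=\min_{a}\shield(\supp(\initdistr),a)$, i.e.\ iff at least one action is enabled there. Finally, $\tok\cpomdp$ has at most $2^{|\states|}(|\states|+1)$ states, each call in line~\ref{aldline-compute-cmdp} is polynomial in its size by Theorem~\ref{thm:cmdp-tvals-algo}, the loop runs boundedly many rounds, and the extraction ranges over $2^{|\states|}\cdot|\act|$ entries with polynomial work each; a routine accounting, whose only exponential factor is the number of belief supports, yields the stated $\mathcal{O}(2^{|\states|}\cdot\mathrm{poly}(\|\cpomdp\|))$ bound.
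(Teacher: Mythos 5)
Your plan has the same skeleton as the paper's own proof---prune the token CoMDP, prove token-independence of the thresholds, transfer strategies in both directions (with the hard ``\(\le\)'' direction done by repeated positive-goal attempts anchored at reloads), then extract the shield via Lemma~\ref{lem:thr-level-shielding} and account for the running time---but two of your concrete choices break. First, your goal convention is inconsistent with the pruning loop you claim to analyze. You declare the goals of \( \tok\cpomdp \) to be token-based, so that \( \tlevPR^{\tok\cpomdp}(\bels,\eguess)=\infty \) for \emph{every} \( \bels \). But Algorithm~\ref{algo:shield} ranges over \emph{all} pairs in \( \tok\reloads \), and \( \tok\reloads \) contains \( (\bels,\eguess) \) for every reload support \( \bels\subseteq\reloads \). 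Under your convention the test \( \tlevPR^{\tok\cpomdp}(\bels,\eguess)=\infty \) fires for every such pair, so the very first iteration strips \emph{all} reload flags; your invariant (``stripped exactly when \( \bels \) is a trap'') is then false, and the central equivalence fails on any instance with a non-trap reload (e.g.\ a fully observable CoMDP whose goal is reachable only by reloading: after your pruning, \( \tlevPR^{\tok\cpomdp} \) of the relevant singleton supports becomes \( \infty \) while \( \tlev^{\cpomdp} \) is finite). You must restrict the pruning test to pairs with a real token \( \guess\in\bels \)---which is what the paper's appendix silently assumes: when a support is removed at iteration \( i \), its proof exhibits a real token \( \beta\in\bels \) with \( \tlevPR^{i-1}(\bels,\beta)=\infty \)---or else adopt a convention for \( \eguess \)-pairs under which they never trigger removal. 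As written, your argument establishes properties of an algorithm that prunes everything.

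Second, the round-robin guessing in your ``\(\le\)'' direction does not support the key claim that ``each actual state is driven into \( \tgt \) with probability bounded away from \(0\) infinitely often,'' and this claim is the crux. With a deterministic schedule, the success probability of an attempt is at most \( \probm(\text{true state}=\text{scheduled guess}\mid\text{history}) \) times a constant, and the posterior probability of an individual support element in a finite CoPOMDP is \emph{not} bounded below over histories: it can decay geometrically across attempts, and since a deterministic schedule is predictable, the model's transition function can anti-correlate the true state with it, potentially making the failure probabilities multiply to a positive limit. Nothing in your sketch excludes this, so the Borel--Cantelli step is unsupported. The paper avoids exactly this pitfall by re-guessing the token \emph{uniformly at random} each time a reload is reached, which yields a history-independent lower bound of \( 1/|\states| \) on the probability of a correct guess, and then runs the positive-goal policy for a bounded phase of \( |\tok\states|\cdot\ca \) steps before retrying; with that single change (plus the resource invariants you defer, which the paper formalizes as its Invariants~1--3), your construction coincides with the paper's and the almost-sure upgrade goes through.
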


\section{RSGO Problem and Shielded POMCP}
\label{sec:fipomdp}

We tackle the RSGO problem by augmenting the POMCP algorithm with our shields.

\paragraph{POMCP.} POMCP~\cite{SV:2010:POMCP} is a well-known online planning algorithm for POMDPs, based on the \emph{Monte Carlo tree search (MCTS)} paradigm. To select an optimal action, POMCP iteratively searches through and expands the \emph{history tree} of a POMDP, whose nodes correspond to histories. 
Each iteration consists of a top-down traversal of the explored part of the tree, selecting simulated actions according to the UCT formula~\cite{KS06} which balances exploitation with exploration. Once the simulation reaches a yet unexplored node, a \emph{rollout} policy (typically selecting actions at random) is used to further extend the sampled trajectory. The trajectory is then evaluated and the outcome is back-propagated to adjust action-value estimates along the sampled branch. After multiple such iterations, POMCP selects action with minimal value estimate in the root to be played by the agent. After receiving the next observation, the corresponding child of the root becomes the new root and the process repeats until a decision horizon is reached. 

\paragraph{Search Tree for Shielding.} We augment POMCP's tree data structure so that each node additionally contains the information about the current resource level and the current belief support (which is computable for each node using the belief support of the parent.) 

\paragraph{FiPOMDP.} Combining a (support-based exact)  shield \( \shield \) with POMCP yields an algorithm which we call FiPOMDP (``Fuel in POMDPs''). FiPOMDP operates just like POMCP, with one crucial difference: whenever POMCP is supposed to select an action in a node representing history \( h \), FiPOMDP chooses only among actions \( a \) such that \( \shield(\bels(h),a) = \en \). This applies to the simulation/tree update phase (where it selects action optimizing the UCT value among all allowed actions), rollouts, and final action selection (where it chooses the allowed action with minimal value estimate). Since POMCP is an online algorithm operating over a finite decision horizon, FiPOMDP solves the RSGO problem in the following approximate sense:


\begin{theorem}
\label{thm:fipomdp}
Let \( N \) be the decision horizon and consider a finite horizon approximation of the RSGO problem where the costs are accumulated only over the first \( N \) steps. 
Consider any decision step of FiPOMDP and let \( h \) be the history represented by the current root node of the search tree. Let \( p_h \) be the probability that the action selected by POMCP to be played by the agent is an action used in \( h \) by an optimal finite-horizon safe goal policy, and \( \mathit{sim} \) the number of simulations used by FiPOMDP. Then for \( \mathit{sim} \rightarrow \infty \) we have that \( p_h \rightarrow 1 \).
\end{theorem}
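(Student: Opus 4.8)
The plan is to reduce FiPOMDP to plain POMCP run over a restricted action space and then combine the standard POMCP convergence guarantee with the exactness of the shield. First I would observe that, by construction, FiPOMDP \emph{is} exactly POMCP operating over the restricted action sets \( \act_h := \{a \in \act : \shield(\belsupf(h),a) = \en\} \) at every history \( h \): the restriction is applied identically in the UCT tree-selection phase, in the rollouts, and in the final action choice at the root. Since each tree node stores its belief support and resource level (computable from the parent), the correct sets \( \act_h \) are available throughout. Thus FiPOMDP is plain POMCP applied to the finite-horizon decision problem whose policy class is precisely the set of shield-conforming policies.

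Next I would invoke the convergence guarantee of POMCP~\cite{SV:2010:POMCP}, itself resting on the convergence of UCT~\cite{KS06}: for a finite action space, finite horizon \( N \), and bounded one-step costs, the UCT value estimates at the root converge to the optimal finite-horizon values of the \emph{restricted} problem, and the empirically selected action converges (in probability) to an action minimizing this value among the available actions \( \act_h \). Hence the probability that FiPOMDP returns an action that is cost-optimal within the shielded problem at \( h \) tends to \( 1 \) as \( \mathit{sim} \to \infty \). (If the optimal shielded action is unique this is immediate; ties are absorbed by the phrasing ``an action used by an optimal policy''.)

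It remains to show that a cost-optimal action of the shielded problem at \( h \) is an action used at \( h \) by an optimal finite-horizon safe goal policy, and this is the crux of the argument; I would establish it using exactness of \( \shield \) in two directions. For the \emph{upper bound}, since every safe goal policy conforms to \( \shield \), the optimal finite-horizon safe goal policy lies in the shielded policy class, so the shielded optimum is no larger than the safe-goal optimum. For the \emph{completion (lower) bound}, let \( \pi^\star \) be a shielded policy optimal for the truncated cost. Every shielded policy is safe and avoids traps, so with probability \( 1 \) the situation \( (\belsupf(H_N),\reslevrv_N) \) reached after the horizon is not a trap; hence from it a safe goal policy exists, and being a safe goal policy it conforms to \( \shield \). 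Splicing such a (history-dependent) continuation onto \( \pi^\star \) after step \( N \) yields a shield-conforming safe goal policy \( \pi' \) that agrees with \( \pi^\star \) on the first \( N \) steps and therefore has the same truncated cost. Thus \( \pi' \) is a safe goal policy attaining the shielded optimum; combined with the upper bound, the two optima coincide and \( \pi' \) is an optimal finite-horizon safe goal policy agreeing with \( \pi^\star \) at \( h \). Consequently every cost-optimal shielded action at \( h \) is used at \( h \) by some optimal finite-horizon safe goal policy, and together with the previous paragraph we conclude \( p_h \to 1 \).

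The \emph{main obstacle} is precisely this last step: reconciling the globally qualitative safe-goal constraint with the truncated quantitative objective that POMCP actually optimizes. The danger is that cost-minimization over merely shield-conforming policies could, a priori, favor a policy that stays safe but loiters rather than heading toward a goal; trap-avoidance is exactly the property that rules this out, guaranteeing a safe goal completion beyond the horizon at no extra truncated cost, so the shielded optimum is always witnessed by a genuine safe goal policy. A secondary, routine technical point is verifying that the standard UCT/POMCP convergence hypotheses hold under the paper's assumptions (finite actions, finite horizon, and bounded costs that are negative only finitely often along each trajectory), so that the value estimates are well defined and the empirical best-action selection converges.
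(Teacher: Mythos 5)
Your proposal is correct and takes essentially the same route as the paper: the paper's entire proof is the single observation that FiPOMDP behaves exactly like POMCP on the CoPOMDP whose history tree has been pruned by the shield, followed by an appeal to Theorem~1 of~\cite{SV:2010:POMCP}. The step you identify as the crux --- splicing a safe goal continuation beyond the horizon, justified by trap-avoidance, to show the shielded truncated optimum is attained by a genuine optimal finite-horizon safe goal policy --- is left entirely implicit in the paper, so your write-up is strictly more detailed than (but consistent with) the published argument.
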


\paragraph{Shielding Other Algorithms.} The shields themselves are algorithm-agnostic and their usage is not limited to POMCP (or to MCTS algorithms in general). Indeed, one of the advantages of our approach is that shields can be used with any algorithm that tracks the current resource level and the current belief support.

\noindent

%
%
%
%







%




\section{Experiments}

\begin{table*}[t!]
\centering
\begin{tabular}{@{}p{1pt}lccccccc@{}}
\toprule
& & \# States & \# Obs & Survival \%& Hit \% & Avg. cost & Avg. time p. dec. (s)  & Shield time (s) \\
\midrule
\multicolumn{4}{l}{\textbf{FiPOMDP (shielded)}}\\
\midrule[0pt]    
&Tiger simple & \( 8 \) & \( 6 \) & \( 100 \) & \( 99.5 \)& \( 310.61 \pm 1942.14 \)& \( 0.05 \pm 0.02 \) & \( <1 \) \\
&Tiger fuzzy& \( 8 \) & \( 6 \) & \( 100 \) & \( 57.3 \) & \( 1019.95 \pm 2031.03 \) & \( 0.08 \pm 0.03  \) &\( <1 \) \\
&UUV grid 8x8 & \( 64  \) & \( 64 \) & \( 100 \) & \( 98 \) & \( -969.81 \pm 153.61 \) & \( 3.21 \pm 1.44 \)& \( 10.65 \)\\
&UUV grid 12x12 & \( 144 \) & \( 144 \)& \( 100 \) &  \( 92 \) &  \( -898.48 \pm 295.92 \) & \( 10.48\pm 3.59 \)  & \( 76.99 \)\\
&UUV grid 16x16 & \( 256 \) & \( 256 \) & \( 100 \) & \( 87 \) & \( -839.29 \pm 364.92 \)& \( 22.21 \pm 5.52 \) & \( 215.18 \)\\
&UUV grid 20x20 & \( 400 \) & \( 400 \)& \( 100 \) & \( 87 \) & \( -839.08 \pm 364.83 \) & \( 34.06 \pm 8.87\) & \( 493.87 \)\\
&Manhattan AEV & \( 22434 \) & \( 7478 \) & \( 100 \) & \( 50 \)& \( 2745.3 \pm 2845.06 \) & \( 13.02 \pm 2.44 \) & \( 65 \)\\
\midrule[0pt]
\multicolumn{4}{l}{\textbf{POMCP (unshielded)}}\\
\midrule[0pt]  
&Tiger simple & \( 8 \) & \( 6 \) & \( 99.9 \) & \( 99.4 \)& \( 311.59 \pm 1941.99 \)& \( 0.02 \pm 0.01 \) & - \\
&Tiger fuzzy& \( 8 \) & \( 6 \) & \( 86.8 \) & \( 48.5 \) & \( 914.87 \pm 1859.81 \) & \( 0.02 \pm 0.01  \) &- \\
&UUV grid 8x8 & \( 64  \) & \( 64 \) & \( 61 \) & \( 60 \) & \( -555.75 \pm 538.11 \) & \( 1.91 \pm 0.27 \)& -\\
&UUV grid 12x12 & \( 144 \) & \( 144 \)& \( 8 \) &  \( 7 \) &  \( 23.76 \pm 279.29 \) & \( 9.49\pm 1.23 \)  & -\\
&UUV grid 16x16 & \( 256 \) & \( 256 \) & \( 4 \) & \( 3 \) & \( 67.52 \pm 185.62 \)& \( 18.23 \pm 1.60 \) & -\\
&UUV grid 20x20 & \( 400 \) & \( 400 \)& \( 6 \) & \( 5 \) & \( 45.77 \pm 237.57 \) & \( 28.94 \pm 2.96\) & -\\
&Manhattan AEV & \( 22434 \) & \( 7478 \) & \( 99 \) & \( 56 \)& \( 2352.88 \pm 2935.04 \) & \( 13.36 \pm 2.38 \) & -\\
\bottomrule
\end{tabular}
\caption{Results of experiments. The top part shows results for FiPOMDP, the bottom for the POMCP baseline.}
\label{tab:results}
\end{table*}

We implemented FiPOMDP in Python. The algorithm for exact shield computation was implemented on top of the planning algorithm for CoMDPs.~\cite{BlaBNOTT:2020:cons-mdp-buchi}. We wrote our own implementation of POMCP, including the particle filter used for belief approximation~\cite{SV:2010:POMCP}. Our implementation can be found at~\url{https://github.com/xbrlej/FiPOMDP}.

\paragraph{Benchmarks.}

We evaluated FiPOMDP on three sets of benchmarks. 
The first benchmark is a toy \emph{resource-constrained Tiger,} a modification of the classical benchmark for POMDPs~\cite{kaelbling1998planning} adapted from~\cite{DBLP:journals/corr/BrazdilCCGN16}. The goal states represent the situation where the agent has made a guess about the tiger's whereabouts. In the resource-constrained variant, the agent's listening actions consume energy, necessitating regular reloads. During each reload, there is a probability that the tiger switches its position. There is a cost of 10 per each step, opening door with tiger/treasure yields cost 5000/-500. We consider two versions: \emph{simple}, where the probability of the observed position of the tiger being correct is \( 0.85 \), and \emph{fuzzy}, where this probability is decreased to \( 0.6 \).

The second benchmark is a partially observable extension of the \emph{unmanned underwater vehicle (UUV)} benchmark from~\cite{BlaCNOTT:2021:FiMDP}. Here, the agent operates in a grid-world, with actions performing movements in the cardinal directions. 
Movement is subject to stochastic perturbations: the UUV might drift sideways from the chosen direction due to ocean currents. The position sensor is noisy: when the agent visits some cell of the grid, the observed position is sampled randomly from cells in the von Neumann neighbourhood of the true cell. We consider 4 gridworld sizes ranging from 8x8 to 20x20. There is a cost of 1 per step, hitting a goal yields ``cost'' -1000.

The final benchmark, adapted from~\cite{BlaBNOTT:2020:cons-mdp-buchi}, consists of a routing problem for an autonomous electric vehicle \emph{(AEV)} in the middle of Manhattan, from 42nd to 116th Street. Road intersections act as states. At each intersection, the AEV picks a direction to continue (subject to real-world one-way restrictions). It will deterministically move in the selected direction, but the energy consumption is stochastic due to the fluctuations in the road congestion. There are three possible consumption levels per road segment, their probability and magnitudes derived from real-world traffic data.~\cite{Uber:2019,Tesla:2008}. Similarly, the reload states correspond to the real-world positions of charging stations~\cite{evdatabase}. To add partial observability, we make the consumption probabilities dependent on the unknown \emph{traffic state} (low/medium/peak) which evolves according to a known three-state Markov chain. The cost is equal to the amount of resource consumed in a given step, with a ``cost'' -1000 when a goal is hit.

\paragraph{Evaluation.} The hardware configuration was: CPU: AMD Ryzen 9 3900X (12 cores); RAM: 32GB; Ubuntu 20.04. 

FiPOMDP is the first approach to solving the RSGO problem in CoPOMDPs. Hence, as a baseline to compare with we chose plain (unshielded) POMCP (with the same hyperparameters),  to see how the formal safety guarantees of FiPOMDP influence resource safety in practice. POMCP itself does not consider resource levels, which puts it at a disadvantage. To mitigate this, we treated (only in the POMCP experiments) resource exhaustion as entering a “breakdown" sink state, from which the target can never be reached. Hence, runs exhausting the resource were penalized with the same cost as runs which did not reach the goal.\footnote{The final archival version will also provide a comparison with a version where resource exhaustion receives large cost penalty.}

The results are pictured in Table~\ref{tab:results}, averaged over 100 runs (1000 for the Tiger benchmark). The first two columns show the number of states and observations. The \emph{Survival \%} is the percentage of runs in which the agent \emph{did not} run out of the resource. The \emph{Hit \%} is the percentage of runs in which the agent hit the target within the decision horizon. The next column shows an average cost incurred by the agent (\( \pm \) the std. deviation). We also present average time per decision step. The final column shows the time needed to compute the shield (including  computation of the token CoMDP).

We highlight the following takeaway messages: (1.) Although computing an exact shield requires formal methods, our algorithm computed a shield within a reasonable time even for relatively large (from a formal methods point of view) CoPOMDPs (the Manhattan benchmark). 
(2.) Shielding is essential for resource safety. The unshielded version never achieved 100\% resource safety. In constrast, FiPOMDP \emph{never} exhausted the resource, validating its theoretical guarantees. (3.) The \emph{Hit percentage} and \emph{Cost} results show that the shielded POMCP is consistently able to reach the goal. 
On the other hand, the hit ratios are sometimes not as high as desired. We suspect that this is because our benchmarks are ``non-smooth'' in the sense that the costs encountered \emph{before} reaching the goal do not provide much information about a \emph{path} towards the goal. This was partially mitigated using \emph{heavy rollouts} (in particular for the gridworld benchmark, where we used non-uniform rollouts with an increased likelihood of the agent repeating the direction chosen in the previous step). (4.) Since unshielded POMCP tends to exhaust the resource, FiPOMDP has (in all but one of the benchmarks) clearly better hit percentage than the unshielded POMCP. In relatively structureless domains, such as the gridworld, shielding seems to help exploring the state space by pruning away parts from which resource exhaustion cannot be prevented. (5.) The Manhattan benchmark stands out in that here, the unshielded version performs better in terms of “Hit \%” than the shielded one. Still, the unshielded version still is not 100\& safe. The benchmark admits a policy for quickly reaching the goal which carries a small risk of resource exhaustion. The unshielded agent takes this policy, while the shielded agent computes a policy which is resource-safe at the cost of slower progress towards the goal. This shows that shields protect even against relatively low (though practically significant) exhaustion risks.

\section{Conclusion}

We presented a shielding algorithm for consumption POMDPs with resource safety and goal reachability objectives. We combined our shields with the POMCP planning algorithm, yielding a heuristic approach to solving the RSGO problem. An interesting direction for the future work is to combine our shielding algorithm with alternative approaches to POMDP planning.

\section*{Acknowledgments}

This work is supported by the Czech Science Foundation grant No. 21-24711S. We thank all the anonymous reviewers for providing feedback on the preliminary versions of this paper.

\bibliography{biblio}


\clearpage
\appendix
\begin{center}
{\Large Technical Appendix}
\end{center}

\newtheorem{invariant}{Invariant}

\newtheorem*{lemma*}{Lemma}

\newtheorem*{theorem*}{Theorem}

\section{Belief Supports for CoPOMDPs}

In the following, we extend belief supports to handle information about resource levels.

 The formal definition of \( \belsupf(h) \) is by induction on the length of \( h \): if \( \len(h)=0 \), then \( h = \ob_0\ell_0 \) for some \( \ob_0 \in \obs \) and \( \ell_0\in \nint{\ca} \), and we put 
\[
\belsupf(h) = \supp(\initdistr) \cap \{s \mid \ob_0 \in \supp(\obsfunc(s))\}.
\]
where \(\initdistr\) in the initial distribution on states. If \( \len(h)>0 \), then \( h = \hat{h}a\ob\ell  \) for some shorter history \( \hat{h} \).
 To define \( \belsupf(h) \), let \( \hat\ob\in \obs \) be the last observation of \( \hat{h} \) and \( \hat{\ell}\) the last resource level of \( \hat{h} \). For a state \( s \in \belsupf(\hat{h}) \), let \( \hat{\ell}_s  \) be \( \ca \) if \( s \in \reloads \) or \( \hat{\ell} \) if \( s \not \in \reloads \). We say that \( s \) \emph{conforms} to \( h \) if one of the following conditions holds:
 \begin{itemize}
 \item \( \hat{\ell} = \bot \); or
 \item \( \hat{\ell} \neq \bot, \ell = \bot  \), and \( \cons(s,a) \geq \hat{\ell}_s \); or
  \item \( \hat{\ell} \neq \bot \neq \ell \), and \( \ell = \hat{\ell}_s - \cons(s,a) \).

 \end{itemize} 
 Now, given \( \belsupf(\hat{h}) \), for \( h = \hat{h}a\ob\ell \) we define \( \belsupf(h) \) to be the set \( \bigcup_{{s\in \belsupf(\hat{h}), s \text{ conforms to \( h \)}}}\mysucc(s,a)\cap\{t\mid \ob \in \supp(\obsfunc(t))\} \).

 \section{Preprocessing for Consistent Consumption}

We can turn a non-consistent CPOMDP \(\cpomdp \) into a consistent CPOMDP \(\widehat{\cpomdp} \)  by "inserting" a new state into the "processing" of each action as shown in Fig.~\ref{fig-preprocessing-consistent-cons}. Let \(s\in \states \) and \(a\in \act \), then we insert a new state \(t_{s,a} \) such that from \(s \) under \(a \) we first reach \(t_{s,a} \), and then by using any action in \(t_{s,a}  \), we "finish resolving" the original action \(a \) on \(s \) thus arriving at the resulting state (chosen according to \(\trans(s,a) \)) and "consuming" the resource. If we define the observation of \(t_{s,a} \) as the value \(\cons(s,a) \), we effectively obtain additional observation equal to the amount of consumed resource whenever an action is being resolved (note that this does not give us any new information as we can already always calculate the exact consumption). The resulting \(\widehat{\cpomdp} \) is then consistent, as resource can only be decreased by the value equal to the current observation (or \(0\) in the original states of \(\cpomdp \)).

Formally we construct \(\widehat{\cpomdp}=(\widehat{\states},\act,\widehat{\trans},\widehat{\obs},\widehat{\obsmap},\widehat{\cons},\reloads,\ca) \) such that:
   
\begin{itemize}
	\item \( \widehat{\states}= \states \cup T\) where \(T=\{t_{s,a}  \mid s\in \states, a\in \act  \} \);
	\item for each \( s\in \states \), \(a\in \act \), \(t_{s',b}\in T \) and \(p\in \widehat{\states} \) we define
	\[\widehat{\trans}(s,a)(p)= \begin{cases}1   & \text{if } p=t_{s,a} \\
		0                                    & \text{else }
	\end{cases}\]
\[\widehat{\trans}(t_{s',b},a)= \trans(s',b); \]

\item \(\widehat{\obs} =\obs \cup \nint{\ca}  \), where we assume that \(\obs \cap \nint{\ca} = \emptyset \);
\item for each \(s\in \states \), \(t_{s',a}\in T\) and \(o\in \widehat{\obs} \) we define \[\widehat{\obsmap}(s)=\obsmap(s) \]\[\widehat{\obsmap}(t_{s',a})(o)=\begin{cases}1   & \text{if } o=\cons(s',a) \\
	0                                    & \text{else }
\end{cases}  \] 

	\item for each \(s\in \states \), \(a\in \act \) and \(t_{s',b}\in T \) we define \(\widehat{\cons}(s,a)=0 \) and \(\widehat{\cons}(t_{s',b},a)=\cons(s',b) \).
\end{itemize}

\section{Proof of Lemma~\ref{lem:thr-level-shielding}}

\begin{figure}[t]\centering
	\begin{tikzpicture}[scale=.85, every node/.style={scale=0.85}, x=2.1cm, y=2.1cm, font=\footnotesize]
		\begin{scope}[shift={(0,0)}]
			\node[state] (s) at (0,0) {\(s\)}; 
			\node[state] (p1) at (0.65,0.5) {\(p_1\)};
			\node[state] (pk) at (0.65,-0.5) {\(p_k\)};
			
			\draw [tran] (s) to  (p1); 
			\draw [tran] (s) to (pk);
			\pic [draw, -,"$a$", angle eccentricity=1.5] {angle = pk--s--p1};
			\draw[dotted,shorten <=0.25cm,shorten >=0.25cm] (p1) to (pk);
			
			\draw [tran] (1.1,0) to (1.85,0);
		\end{scope}
		\begin{scope}[shift={(2.5,0)}]
			
			\node[state] (s) at (0,0) {\(s\)};
			\node[state] (t) at (0.65,0) {\(t_{s,a}\)}; 
			\node[state] (p1) at (1.3,0.5) {\(p_1\)};
			\node[state] (pk) at (1.3,-0.5) {\(p_k\)};
			
			\draw [tran] (s) to node[above] {a} (t); 
			\draw [tran] (t) to  (p1); 
			\draw [tran] (t) to (pk);
			\pic [draw, -,"$act$", angle eccentricity=1.75] {angle = pk--t--p1};
			\draw[dotted,shorten <=0.25cm,shorten >=0.25cm] (p1) to (pk);
		
		\node[state] (s) at (0,0) {\(s\)};
		\node[state] (t) at (0.65,0) {\(t_{s,a}\)}; 
		\node[state] (p1) at (1.3,0.5) {\(p_1\)};
		\node[state] (pk) at (1.3,-0.5) {\(p_k\)};
		
		\draw [tran] (s) to node[above] {a} (t); 
		\draw [tran] (t) to  (p1); 
		\draw [tran] (t) to (pk);
		\pic [draw, -,"$act$", angle eccentricity=1.75] {angle = pk--t--p1};
		\draw[dotted,shorten <=0.25cm,shorten >=0.25cm] (p1) to (pk);
		\end{scope}
	\end{tikzpicture}
	\caption{"inserting" new state \(t_{s,a} \) into the action \(a \) from state \(s \), where \(act \) represents arbitrary action.}
	\label{fig-preprocessing-consistent-cons}
\end{figure}
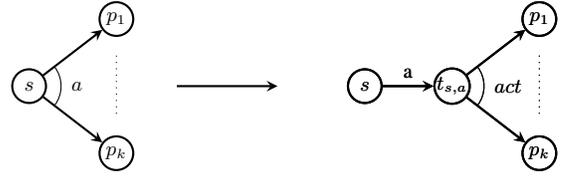


\begin{lemma*}[\textbf{1}]
	Let \( \shield \) be an exact shield. Then for each \( h\in \hist \) and each \( a \in \act \) we have that \( \shield(h,a) = \en \) if and only if \( \ell_h \) is greater than or equal to the smallest number \( \tau \) s.t. for any \( s \in \belsupf(h) \) and any valid history of the form \( ha\ob\ell \) s.t. \( \obsfunc(\ob | t)>0  \) for some \( t \in \mysucc(s, a) \) it holds that \( \resup(\tau, s, a) \geq \tlev(ha\ob\ell) \).
\end{lemma*}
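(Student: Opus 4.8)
The plan is to prove the two implications separately and to phrase everything through the notion of a trap. The one fact I will use throughout is an immediate consequence of the definitions of \emph{trap} and \emph{threshold level}: for any history \( h' \), the pair \( (\belsupf(h'),m) \) is a trap if and only if \( m<\tlev(h') \); equivalently, the reset CoPOMDP with uniform initial distribution over \( \belsupf(h') \) and initial level \( m \) admits a safe goal policy iff \( m\geq\tlev(h') \). Write \( \tau^* \) for the threshold appearing in the statement. Since \( \resup(\tau,s,a) \) is monotone non-decreasing in \( \tau \) while each quantity \( \tlev(ha\ob\ell) \) on the right-hand side is fixed (it depends only on the successor belief support), the predicate \( \forall s,\ob,\ell:\ \resup(\tau,s,a)\geq\tlev(ha\ob\ell) \) is monotone in \( \tau \). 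Hence \( \ell_h\geq\tau^* \) is equivalent to this predicate holding at \( \tau=\ell_h \), i.e.\ to the statement that for every \( s\in\belsupf(h) \) and every outcome \( ha\ob\ell \) reachable via \( s \), the actually reached level \( \resup(\ell_h,s,a) \) is \( \neq\bot \) and satisfies \( \resup(\ell_h,s,a)\geq\tlev(ha\ob\ell) \). By the bridge above, the latter says precisely that playing \( a \) from \( h \) never exhausts the resource and lands only in non-trap successor situations.

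For the direction \( \ell_h\geq\tau^*\Rightarrow\shield(h,a)=\en \) I will construct a witnessing safe goal policy and invoke the third property of exactness. Assuming \( \ell_h\geq\tau^* \), every successor \( (\belsupf(ha\ob\ell),\resup(\ell_h,s,a)) \) is non-exhausting and non-trap, hence each admits a safe goal policy. I build a policy \( \pi \) that plays \( a \) first and then, having observed \( \ob \) and the new level \( \ell \) (which together with \( \belsupf(h) \) determine the successor belief support), follows a safe goal policy for that successor. A one-step prefix of finitely many safe, almost-surely-goal-reaching continuations is again safe and reaches a goal almost surely, so \( \pi \) is a safe goal policy selecting \( a \) at the distinguished history. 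By the property ``every safe goal policy conforms to \( \shield \)'', \( \pi \) conforms; since it selects \( a \) there with positive probability, conformance forces \( \shield(h,a)=\en \).

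Conversely, suppose \( \ell_h<\tau^* \). By the first paragraph the predicate fails, so there is a state \( s\in\belsupf(h) \) and an outcome \( ha\ob\ell \), reachable from \( s \) under \( a \) with positive probability, for which either \( \resup(\ell_h,s,a)=\bot \) (immediate exhaustion) or \( \resup(\ell_h,s,a)<\tlev(ha\ob\ell) \) (the successor is a trap). Thus any policy playing \( a \) at the distinguished history reaches, already after one step and with positive probability, an exhausted configuration or a trap. By the first two properties of exactness every conforming policy is safe and avoids traps, so no conforming policy may select \( a \) there; since an action enabled by \( \shield \) can be selected by a conforming policy, this forces \( \shield(h,a)=\dis \). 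Combining the two directions yields the claimed equivalence.

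The main obstacle, and the place where the argument must be made fully rigorous, is the implicit passage between the global CoPOMDP in which \( \shield \) is declared exact and the local situation \( (\belsupf(h),\ell_h) \) on which the trap/threshold machinery lives. Two facts underlie this passage: (i) safety and almost-sure goal reachability are \emph{shift-invariant}, i.e.\ whether a continuation is safe and almost-surely goal-reaching depends only on the current belief support and resource level---this is what lets me identify ``non-trap successor'' with ``successor admitting a safe goal continuation'' and splice continuation policies; and (ii) to apply the three defining properties at \( h \) I must know that an action enabled at \( h \) is genuinely usable by a \emph{conforming} policy that reaches \( h \). The clean way to secure (ii) is to run the whole argument inside the reset CoPOMDP \( \cpomdp_{\belsupf(h),\ell_h} \), whose length-\( 0 \) history has belief support \( \belsupf(h) \) and level \( \ell_h \) and is reached with probability one; because the exact shield is support-based this entails no loss, and it makes the distinguished history trivially reachable so that both policy constructions are available. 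The only edge case requiring a convention is the behaviour on already-exhausted histories, which is exactly where the statement permits the shield to be non-unique. Verifying shift-invariance and this reset reduction is the non-routine part; the remaining steps follow directly from the definitions.
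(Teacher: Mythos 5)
Your proof is correct and takes essentially the same route as the paper's: the same trap/threshold-level bridge, the same one-step splicing of safe goal continuation policies to force \( \shield(h,a)=\en \) via the third exactness property, and the same exhaustion-or-trap argument to force \( \shield(h,a)=\dis \) via the first two properties (your two direct implications are just the contrapositives of the paper's two contradiction arguments). One caveat about your final paragraph: the proposed reset-CoPOMDP reduction is justified by appealing to the exact shield being support-based, but in the paper support-basedness is obtained as a \emph{consequence} of this very lemma, so invoking it here is circular --- you would need to argue the shift-invariance facts directly instead (which your sketch essentially contains); note also that the paper's own proof silently glosses over the same reachability-under-conforming-policies subtlety that you flag.
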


\begin{proof}
	"\(\Rightarrow \)" Assume that there exists an exact shield \(\shield \), \(h\in \hist \) and \(a\in \act \) such that \(\shield(h,a)=\en \) and \(\ell_h< \tau \).
	
	Then there exist \( s \in \belsupf(h), t\in \mysucc(s,a)\), and \(\ob \in \supp(\obsfunc(t)) \) such that \( \resup(\ell_h, s, a) < \tlev(ha\ob\ell) \). 
	
	Since \(\shield(h,a)=\en \) and \(\shield \) is exact, it holds that \((\belsupf(ha\ob\ell),\ell) \) is not a trap. Therefore there exists a safe goal policy for history \(ha\ob\ell \), which is a contradiction with \(\resup(\ell_h, s, a)< \tlev(ha\ob\ell)\).


	"\(\Leftarrow \)" Assume that there exists an exact shield \(\shield \), \(h\in \hist \) and \(a\in \act \) such that \(\shield(h,a)=\dis \) and \(\ell_h\geq 
	\tau \).
	
	
	Then let \(s_1t_1\ob_1, \dots, s_kt_k\ob_k  \) be all the possible values of \(st\ob \), and let \(\ell_i=\resup(\reslevrv_{h},s_i,a) \). As it holds  \(\ell_i\geq \tlev(ha\ob_i\ell_i)  \) for all \(i \), there exist safe goal policies \(\pi_1,\dots,\pi_k \) for histories \( ha\ob_1\ell_1,\dots,ha\ob_k\ell_k \) respectively. Now let us consider the policy \(\pi \) which for history \( h\) chooses action \( a\), and if the next history is \(ha\ob_j\ell_j \) it then proceeds as the policy \(\pi_j \). We claim that \(\pi\) is a safe goal policy from history \(h\). As any history produced using \(\pi \) from \(h\) is also a history produced by \(\pi_j \) from \(ha\ob_j\ell_j\) for some \(j\), if \(\pi \) were not to be safe then either \(\pi \) can run out of resource during the first step, or there exists \(\pi_j \) that is not safe from it's corresponding history. The former being a contradiction with \(\ell_j\geq 0 \) for all \(j \), while the latter is a contradiction with each \(\pi_j \) being safe from the corresponding history. Therefore \(\pi\) is safe. And since each of \(\pi_j \) is a goal policy, \(\pi \) is also a goal policy. Therefore there exists a safe goal reaching policy for history \(h\) which chooses action \(a\), which is a contradiction with \(\shield\) being exact.
\end{proof}

\section{Proof of Theorem~\ref{thm:shield-al-correct}}

\begin{theorem*}[\textbf{2}]
	After the repeat cycle in Algorithm~\ref{algo:shield} finishes, for any \( \bels \subseteq 2^\states \) and \( \alpha,\beta \in \bels \) it holds \( \tlevPR^{\tok\cpomdp}(\bels,\alpha) = \tlevPR^{\tok\cpomdp}(\bels,\beta) \). Moreover, the
	computed \(\shield \) is the unique (support-based succinct) exact shield for \( \cpomdp \).  There is a safe goal policy (i.e., the RSGO problem admits a feasible solution) iff \( \shield \) enables at least one action for the history of length~0.
	The algorithm runs in time \( \mathcal{O}(2^\states\cdot\mathit{poly}(||\cpomdp||)) \), where \( ||\cpomdp|| \) denotes the encoding size of \( \cpomdp \) (with all integers encoded in binary).
	
\end{theorem*}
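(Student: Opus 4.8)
The plan is to establish the three assertions in turn, building on Lemma~\ref{lem:thr-level-shielding} (which characterizes the unique exact shield in terms of threshold levels) and Theorem~\ref{thm:cmdp-tvals-algo} (efficient computation of positive-threshold levels in a CoMDP).

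First I would prove the \emph{token-independence} claim: after the repeat loop, $\tlevPR^{\tok\cpomdp}(\bels,\alpha)=\tlevPR^{\tok\cpomdp}(\bels,\beta)$ for all $\alpha,\beta\in\bels$. The intuition is that the token $\alpha$ only constrains \emph{which} successor states the guess can track, but under a uniform-over-$\bels$ initial distribution the \emph{set} of reachable belief supports $\bsucc(\bels,a)$ is the same regardless of the token, and a safe positive-goal policy for one token can be transplanted to another. I would argue this by relating $\tlevPR^{\tok\cpomdp}(\bels,\alpha)$ back to the genuine positive-threshold level $\tlevPR^{\cpomdp}$ of the underlying CoPOMDP: the whole point of the token construction (and the pruning that removes the reload property from trap-inducing states) is that the pruned token CoMDP faithfully captures exactly those resource levels from which a \emph{safe} policy with positive goal-probability exists in $\cpomdp$, and this quantity depends only on the belief support $\bels$, not on the auxiliary guess. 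The $\varepsilon$ (empty-guess) case is handled separately since it tracks no state; I expect to show it yields the same value as any genuine token.

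Second, I would prove that the computed $\shield$ is the unique support-based succinct exact shield. By Lemma~\ref{lem:thr-level-shielding} it suffices to show that the value $\textit{MAX}$ assigned to $\shield(\bels,a)$ equals the threshold $\tau$ from that lemma. The algorithm computes, for each $s\in\bels$ and each $\bels'\in\bsucc(\bels,a,s)$, the quantity $\tlevPR^{\tok\cpomdp}(\bels',\alpha')$ (token-independent by the first part, hence equal to $\tlev(ha\ob\ell)$ for the corresponding successor history), takes a maximum over successors to get $\textit{SMAX}$, applies $\resupinv$ to pull this back through one step at $s$, and maximizes over $s$. This is precisely the smallest $\tau$ such that $\resup(\tau,s,a)\ge\tlev(ha\ob\ell)$ for all relevant $s,\ob,\ell$, matching Lemma~\ref{lem:thr-level-shielding}. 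The key subtlety here, which I expect to be the \textbf{main obstacle}, is justifying that the \emph{positive}-threshold level computed on the pruned token CoMDP coincides with the genuine (almost-sure) threshold level $\tlev$ of the CoPOMDP: one must argue that after iteratively discarding trap reload states, a state retaining the reload property admits a \emph{safe goal} (almost-sure) policy, so that the positive-goal criterion on the fixpoint coincides with the almost-sure-goal criterion on $\cpomdp$. This is the heart of why the naive application to the unpruned token CoMDP fails (cf.\ the Example), and I would prove it via a fixpoint/inductive argument showing the pruning stabilizes exactly on the set of belief supports from which resource exhaustion can be permanently avoided en route to a goal.

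Third, the feasibility characterization follows immediately: a safe goal policy exists iff the initial history (length $0$, with belief support $\belsupf(\ob_0\ell_0)=\supp(\initdistr)$ intersected with the observation, and resource level $\initlev$) is not a trap, which by exactness of $\shield$ and Lemma~\ref{lem:thr-level-shielding} holds iff some action is enabled there. Finally, for the complexity bound I would observe that $\tok\states$ has $\mathcal{O}(2^{|\states|}\cdot|\states|)$ states, the repeat loop runs at most $|\tok\reloads|$ times (each iteration permanently removes at least one reload state), each iteration invokes the polynomial-time CoMDP algorithm of Theorem~\ref{thm:cmdp-tvals-algo} on an input of size $2^{|\states|}\cdot\mathit{poly}(\|\cpomdp\|)$, and the shield-extraction loops range over $\mathcal{O}(2^{|\states|}\cdot|\act|)$ pairs with polynomial work each (computing $\bsucc$ and $\resupinv$), giving the claimed $\mathcal{O}(2^{|\states|}\cdot\mathit{poly}(\|\cpomdp\|))$ bound with all integers, including $\ca$, encoded in binary.
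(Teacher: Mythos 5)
There is a genuine gap here, and in one place an outright error. In your first step you propose to obtain token-independence by arguing that the pruned token CoMDP captures exactly ``those resource levels from which a safe policy with positive goal-probability exists in \( \cpomdp \),'' i.e.\ that the pruned values coincide with \( \tlevPR^{\cpomdp} \). This is false, and the paper's own Example (Figure~\ref{fig-gadgets}) refutes it: from \( s \) there \emph{is} a safe positive-goal policy in \( \cpomdp \) (play \( a \) twice, landing in the reload belief \( \{r,t\} \), then play \( a \) forever; this is safe and reaches \( g \) with probability \( 1/2 \)), so \( \tlevPR^{\cpomdp} \) is finite at \( s \), yet the pruning removes the reload property from all \( (\{r,t\},\cdot) \) and the pruned token value at \( (\{s\},s) \) is \( \infty \) --- correctly matching \( \tlev^{\cpomdp}(s)=\infty \). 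The correct identity, which you yourself state in your second step, is \( \tlev^{\cpomdp}(h) = \tlevPR^{\tok\cpomdp'}(\belsupf(h)) \): the \emph{almost-sure} threshold of the CoPOMDP equals the \emph{positive} threshold of the pruned token CoMDP. There is also a circularity in your plan: you want to deduce token-independence from this identity, whereas the paper must establish token-independence \emph{first} (safe policies ignore the token because of consumption consistency, and a safe policy can be upgraded to a safe positive-goal one by following it until a reload carrying a non-\( \eguess \) token is reached, which happens with positive probability by Lemma~\ref{lem:guess-has-nonempty-succ}), precisely because token-independence is an ingredient in both directions of the identity's proof.

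The identity itself is the heart of the theorem, and your proposal leaves it at ``a fixpoint/inductive argument,'' which is a restatement of the claim, not a proof. Two nontrivial constructions are required. For \( \tlev^{\cpomdp}(h)\geq\tlevPR^{\tok\cpomdp'}(\belsupf(h)) \) one must show that a safe goal policy of \( \cpomdp \) never produces a history whose belief support was pruned; the paper proves this by induction on the pruning iterations, equipping the intermediate token CoMDP \( \tok\cpomdp^{i-1} \) with an observation function so that the CoPOMDP policy can literally be executed on it, and deriving a contradiction at the first pruned reload visited. For \( \tlev^{\cpomdp}(h)\leq\tlevPR^{\tok\cpomdp'}(\belsupf(h)) \) --- exactly the positive-to-almost-sure amplification you flag as the main obstacle --- the paper constructs a policy for \( \cpomdp \) that internally simulates \( \tok\cpomdp' \): follow a safe policy until a surviving reload is reached, \emph{guess the hidden state by placing the token uniformly at random over the current belief support}, run the safe positive-goal policy for \( |\tok\states|\cdot\ca \) steps (enough to contain a goal-reaching path of positive probability, since CoMDP configurations are state--level pairs), and restart on failure; since each episode succeeds with probability bounded away from zero and the invariants relating simulated to true belief supports and resource levels guarantee safety, infinitely many restarts reach the goal almost surely. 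Without these two arguments (or substitutes for them), the exactness of the computed shield is not established; your treatment of the shield-extraction step, the feasibility criterion, and the complexity bound is sound, but all of it rests on the unproven identity.
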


In the following we use \(\tok\cpomdp'=(\tok\states,\act,\tok\trans,\tok\cons,\tok\reloads',\ca) \) to denote the "pruned" \(\tok\cpomdp \) from algorithm~\ref{algo:shield} (i.e. \(\tok\cpomdp' \) is equal to \(\tok\cpomdp \) after the last iteration of the repeat cycle at Lines~\ref{algline-rep-start}-\ref{algline-rep-end}). 
We denote by \( \tlevPR^{\tok\cpomdp'}(\bels) \) the common \( \tlevPR^{\tok\cpomdp'} \)-value of all tuples of the form \( (\bels,\alpha), \alpha\neq \varepsilon \) (the existence of such value is proven in Lemma~\ref{app:lemma-safe-produce-positive}).

%
%



\begin{lemma}\label{app:lemma-tlev-is-tlevpr}
	For any history \(h\in \hist \) it holds \( \tlev^\cpomdp(h) = \tlevPR^{\tok\cpomdp'}(\belsupf(h)) \).
\end{lemma}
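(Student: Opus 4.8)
The plan is to fix $B := \belsupf(h)$ and prove the two inequalities $\tlev^\cpomdp(h) \ge \tlevPR^{\tok\cpomdp'}(B)$ and $\tlev^\cpomdp(h) \le \tlevPR^{\tok\cpomdp'}(B)$. By Definition~\ref{def:thr-level} the value $\tlev^\cpomdp(h)$ depends on $h$ only through $B$, so the statement is really about belief supports and I may forget the concrete history. The backbone of both directions is that the pair $(\text{belief support},\text{resource level})$ evolves \emph{identically} in $\cpomdp$ and in $\tok\cpomdp$: consumption in the token CoMDP is read off from $B$ by consistency, and $\bels \in \tok\reloads$ iff $\bels \subseteq \reloads$, which — because reload states are never lookalikes of non-reload states — coincides exactly with the agent sitting in a reload state. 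Consequently resource safety of a policy transfers verbatim between the two models; what differs between $\tlev^\cpomdp$ and $\tlevPR^{\tok\cpomdp'}$ is only the \emph{reachability} mode (almost-sure goal versus positive goal) together with the token and the pruning.

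For $\tlev^\cpomdp(h) \ge \tlevPR^{\tok\cpomdp'}(B)$ I would start from a safe goal policy $\pi$ in $\cpomdp$ at level $\ell = \tlev^\cpomdp(h)$, which — the constraints being qualitative — may be taken to depend only on the current belief support and level. Projecting $\pi$ to $\tok\cpomdp'$ and letting the token follow any consistent guess produces a strategy with the same belief-support/level trajectories. Almost-sure reaching of $\tgt$ implies positive reaching, which lifts to a path to a goal state $(\bels_g,\cdot)$ with $\bels_g\subseteq\tgt$; and since a safe goal policy can never enter a trap (from a trap no safe goal policy continues), every reload belief support it uses is a non-trap and hence survives pruning, i.e. lies in $\tok\reloads'$. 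Thus the projection is a legitimate, safe, positive-goal strategy of $\tok\cpomdp'$ started at level $\ell$, giving $\tlevPR^{\tok\cpomdp'}(B)\le\ell$.

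The reverse inequality $\tlev^\cpomdp(h) \le \tlevPR^{\tok\cpomdp'}(B)$ is the heart of the argument and is where positive reachability has to be \emph{upgraded} to almost-sure reachability. Starting from $\ell = \tlevPR^{\tok\cpomdp'}(B)$, and using the token-independence of $\tlevPR^{\tok\cpomdp'}$ recorded in Lemma~\ref{app:lemma-safe-produce-positive}, I would build a belief-support-and-level-based policy $\pi$ for $\cpomdp$ that, at every reachable configuration $(\bels,\ell')$ with $\ell'\ge\tlevPR^{\tok\cpomdp'}(\bels)$, plays an action lying on a safe positive-goal witness in $\tok\cpomdp'$, chosen so that every successor configuration again has level at least its own $\tlevPR^{\tok\cpomdp'}$ threshold. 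Safety of $\pi$ is immediate from the dynamics matching. For almost-sure reachability the crucial invariant is that $\pi$ never leaves the set of non-trap configurations: every reload it uses is in $\tok\reloads'$, so by the fixed-point property of the pruning loop each visited configuration still admits a safe positive-goal continuation. Because $\pi$ is measurable in $(\bels,\ell')$ and both components range over finite sets ($\ell'\in\nint{\ca}$), it induces a \emph{finite} Markov chain on configurations in which a goal belief support is reachable with positive probability from every reachable configuration within a bounded number of steps — the bound supplied by the no-indefinite-postponing assumption, which forces reloads, and hence fresh positive-goal witnesses, to recur. The standard fact that a finite Markov chain almost surely reaches a target reachable from every state then gives $\probm^\pi(S_t\in\tgt\text{ for some }t)=1$, so $\pi$ is a safe goal policy and $\tlev^\cpomdp(h)\le\ell$.

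The main obstacle I anticipate is twofold. First, the clean phrase ``non-trap belief supports survive pruning'' hides a potential circularity, since ``trap'' is a $\cpomdp$-notion while ``pruned'' is a $\tok\cpomdp$-notion; I would break it by an induction on the pruning rounds, showing that after each round the reload belief supports removed so far are exactly the traps, so that at the fixed point $\tok\reloads'$ consists precisely of the winning (non-trap) reloads. Second, and most delicate, is justifying that the per-configuration positive-goal witnesses can be stitched into a \emph{single} policy reaching the goal almost surely against the stochastic branching of $\cpomdp$: this is exactly the correctness of the token construction (that positive reachability carried by the token, together with never leaving winning supports, entails almost-sure reachability), here adapted to the resource-constrained setting, and it is where the finiteness of the configuration space and the recurrence of reloads do the real work.
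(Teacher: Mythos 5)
Your two-inequality plan and its main ingredients (policy transfer between \(\cpomdp\) and the pruned token CoMDP, induction on pruning rounds, boosting positive reachability to almost-sure reachability via recurrence of reloads) mirror the paper's proof, but two of your steps have genuine holes. In the direction \(\tlev^\cpomdp(h) \ge \tlevPR^{\tok\cpomdp'}(\belsupf(h))\), you assume without justification that a safe goal policy of \(\cpomdp\) ``may be taken to depend only on the current belief support and level.'' This claim is load-bearing: without it your ``projection'' of \(\pi\) into \(\tok\cpomdp'\) is not even defined, because a history of the token CoMDP does not determine the observation sequence that a general history-dependent \(\pi\) reads. It is also nothing like a triviality --- the paper never proves it, and even the safe goal policy the paper eventually constructs uses memory beyond \((\bels,\ell)\) (a simulated token plus a step counter). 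The paper avoids the issue by endowing the token CoMDP with an observation function (each \((\bels,\alpha)\) emits an observation common to all states of \(\bels\)), so that the original, unrestricted \(\pi\) can be executed on it directly. Relatedly, your proposed induction invariant ``after each round the removed reloads are exactly the traps'' is false at intermediate rounds: a reload that is a genuine trap can survive early rounds because its positive-goal witness passes through other, not-yet-pruned trap reloads --- this is exactly why the pruning must iterate. Only the one-sided invariant (every removed reload is avoided by every safe goal policy) holds, and that is what the paper establishes with its least-pruning-round contradiction argument.

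The more serious gap is in the direction \(\tlev^\cpomdp(h) \le \tlevPR^{\tok\cpomdp'}(\belsupf(h))\), where you name the crux --- stitching per-configuration positive-goal witnesses into one almost-surely goal-reaching policy --- but do not carry it out; you defer it to ``the correctness of the token construction,'' which is precisely what has to be proved here. Concretely, your policy ``plays an action lying on a safe positive-goal witness,'' but witnesses from \((\bels,\alpha)\) for different tokens \(\alpha\) generally prescribe conflicting actions, and the agent cannot condition on the unknown true state; hence ``a goal belief support is reachable with positive probability from every reachable configuration'' does not follow from every configuration merely admitting some witness. (Also, under a support-based policy the \((\bels,\ell)\) process by itself is not a Markov chain; you must adjoin the hidden state to get one.) The missing mechanism is the paper's actual construction: run a safe policy until a reload of \(\tok\reloads'\) is visited, there guess the token uniformly at random from \(\bels\), follow the positive-goal witness of the guessed token for \(|\tok\states|\cdot\ca\) steps, and repeat. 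With positive probability the guess equals the true current state and the true trajectory realizes the token's goal-reaching path (token transitions are uniform over consistent successors), while safety together with the no-indefinite-postponement assumption forces reloads to recur, so retries happen infinitely often and the goal is reached almost surely. Without this random-guess-and-retry argument, your proof of the key inequality is incomplete.
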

%
%

This follows directly from Lemmas~\ref{app:lemma-tlev-leq-tlevpr} and~\ref{app:lemma-tlev-geq-tlevpr} that are stated and proven below.

Therefore from Lemma~\ref{lem:thr-level-shielding} we get that the shield \(\shield \) computed by algorithm~\ref{algo:shield} is the unique support-based exact shield for \( \cpomdp\).

	As each belief support is a subset of \(\states \) it holds that \(|\tok\states|\) is in  \(\mathcal{O}(2^{|\states|}) \). The number of iterations of the repeat cycle at lines~\ref{algline-rep-start}-\ref{algline-rep-end} is bounded by \(|\tok\reloads|\leq |\tok\states| \), as each iteration is either the last one or at least one reload is removed from \(\tok\reloads\). 
	Removing reloads and computing the token MDP can both be done in time \(\mathcal{O}(2^{|\states|}) \).
	Computing \(\tlevPR^{\tok\cpomdp} \) at line~\ref{aldline-compute-cmdp}  takes time which is polynomial in \(||\tok\cpomdp || \) as per Theorem~\ref{thm:cmdp-tvals-algo}. Therefore the running time of the algorithm~\ref{algo:shield} is  in \(\mathcal{O}(2^{|\states|}\cdot \mathit{poly}(||\tok\cpomdp||) ) \). Which can be expressed as  \(\mathcal{O}(2^{|\states|}\cdot2^{|\states|}\cdot \mathit{poly}(||\cpomdp||) ) \), as the two differ only in the sets \(\tok\states,\tok\trans,\tok\obs,\tok\obsmap,\tok\cons\) and \(\tok\reloads \) all of which have at most \(\mathcal{O}(2^{|\states|}) \) elements. 

Before proving Lemma~\ref{app:lemma-tlev-is-tlevpr}, we prove that \( \tlevPR^{\tok\cpomdp'}(\bels) \) is indeed well-defined.


\begin{lemma}\label{app:lemma-safe-produce-positive}
	For any \( \bels \subseteq 2^\states \) and \( \alpha,\beta \in \bels \) it holds \( \tlevPR^{\tok\cpomdp'}(\bels,\alpha) = \tlevPR^{\tok\cpomdp'}(\bels,\beta)\).  
	
	Furthermore if there is a safe policy \(\pi \) from history \((\bels,\alpha)\ell \) for any \(\alpha\in \bels\cup \{\varepsilon \} \) in \(\tok\cpomdp' \), then \(\tlevPR^{\tok\cpomdp'}(\bels,\beta)\leq \ell\) for all \(\beta\in \bels \), and \( \pi \) is a safe policy for  \((\bels,\varepsilon)\ell \).
\end{lemma}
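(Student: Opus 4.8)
The plan is to exploit the fact that, in the pruned token CoMDP \( \tok\cpomdp' \), everything relevant to the positive-threshold level depends only on the belief-support component \( \bels \) and not on the token \( \guess \). The consumption \( \tok\cons \) is support-determined by construction; the reload set is support-determined because the pruning loop always removes \( (\bels,\beta) \) for all \( \beta\in\bels\cup\{\varepsilon\} \) simultaneously, so \( (\bels,\guess)\in\tok\reloads' \) holds or fails at once for every token; the set of goal configurations is support-determined because goal states are observable, so along every history the belief support is either contained in \( \tgt \) or disjoint from it; and for every action \( a \) the projection of \( \tok\mysucc((\bels,\guess),a) \) onto its first coordinate equals exactly \( \mysucc(\bels,a) \), independently of \( \guess \). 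Consequently the resource dynamics and the family of reachable belief-support sequences out of \( (\bels,\guess)\ell \) coincide for all tokens, and the \( \varepsilon \)-region of \( \tok\cpomdp' \) (which is closed, since a \( \varepsilon \)-token can never be escaped) is isomorphic to a perfectly observable belief-support CoMDP \( \bar M \) on the sets \( \bels \).

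First I would prove token-independence of \( \tlevPR^{\tok\cpomdp'} \), i.e. the first sentence of the lemma, by showing \( \tlevPR^{\tok\cpomdp'}(\bels,\guess)=\tlevPR^{\bar M}(\bels) \) via two inequalities. The bound \( \tlevPR^{\tok\cpomdp'}(\bels,\guess)\le \tlevPR^{\bar M}(\bels) \) is the easy, lifting, direction: any safe positive-goal policy witnessing \( \tlevPR^{\bar M}(\bels)=\ell \) can be replayed from \( (\bels,\guess)\ell \) by a token-oblivious policy that erases tokens from the history before consulting the belief-support policy; since the belief-support dynamics, the resource updates, and the support-level event of reaching \( \tgt \) are all unchanged, the lifted policy is again safe and reaches a goal with positive probability. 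The reverse inequality is the harder, projection, direction, handled through the qualitative nature of both objectives: safety is the controller's ability to keep the resource level out of \( \bot \) forever, an objective that factors through the projection \( \rho(\bels,\guess)=\bels \), so the controller's safe region is token-independent and \( (\bels,\guess)\ell \) safe implies \( \bels \) safe at level \( \ell \) in \( \bar M \). For positive reachability I would extract from a safe positive-goal policy at \( (\bels,\guess)\ell \) one finite resource-safe trajectory to a goal configuration, project it to a finite resource-safe goal-reaching path in \( \bar M \), and splice it onto a globally safe \( \bar M \)-policy (follow the path; on any deviation fall back to the safe policy), which stays safe because every configuration visited by the original path is itself safe, and reaches \( \tgt \) with positive probability.

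For the furthermore part I would argue as follows. Safety is token-independent by the projection argument above, so a safe policy \( \pi \) from \( (\bels,\guess)\ell \) certifies that \( \bels \) is safe at level \( \ell \) in \( \bar M \); the policy induced by \( \pi \) on the closed \( \varepsilon \)-region (isomorphic to \( \bar M \)) is then a safe policy from \( (\bels,\varepsilon)\ell \). To upgrade safety to the stronger \( \tlevPR^{\tok\cpomdp'}(\bels,\beta)\le\ell \), I would combine two facts: the standing assumption that the agent cannot postpone consuming a positive amount of resource forever unless a goal has been reached, and the fixed-point invariant of the pruning loop that every surviving reload in \( \tok\reloads' \) has finite \( \tlevPR^{\tok\cpomdp'} \). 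Under \( \pi \), any infinite safe trajectory that does not reach \( \tgt \) must reload infinitely often, hence visit some surviving reload at full capacity \( \ca \); from there a safe positive-goal policy exists, so a goal is reachable with positive probability along a safe continuation. Thus \( (\bels,\guess)\ell \) admits a safe positive-goal policy, giving \( \tlevPR^{\tok\cpomdp'}(\bels,\guess)\le\ell \), and the first part then propagates this bound to every \( \beta\in\bels \).

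The main obstacle I anticipate is the reverse (projection) inequality of the first part, i.e. formally showing that access to the token cannot lower the positive-threshold level. The lifting direction and the consumption/reload/goal bookkeeping are routine, but the projection must carefully argue that both qualitative objectives genuinely factor through \( \rho \); the cleanest route is to treat safety as a two-player safety game (controller versus the adversarial resolution of the probabilistic branching) whose arena and winning condition are pulled back from \( \bar M \) along \( \rho \), and to treat positive goal reachability through a single witnessing finite path rather than through a full policy, since only the latter would reintroduce token dependence.
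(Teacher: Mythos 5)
Your proposal is correct, and it reaches the statement by a genuinely different organization than the paper's own proof, although both rest on the same two pillars: (i) consumption, reload status, and goal membership in \( \tok\cpomdp' \) are determined by the belief-support component alone, so safety is token-independent; and (ii) the pruning fixed point guarantees that every surviving reload admits a safe positive-goal policy, which lets one upgrade a safe policy to a safe positive-goal policy by switching at the first reload it visits. The paper works entirely inside \( \tok\cpomdp' \): it assumes w.l.o.g.\ \( \tlevPR^{\tok\cpomdp'}(\bels,\alpha)<\tlevPR^{\tok\cpomdp'}(\bels,\beta) \), transfers the safe policy from the \( \alpha \)-token to the \( \beta \)-token, and performs the reload-switching only at reloads carrying a \emph{valid} token \( \alpha'\neq\eguess \); this forces it to invoke the auxiliary Lemma~\ref{lem:guess-has-nonempty-succ} (a valid token always has a valid-token successor, hence the first reload is reached with a valid token with positive probability). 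You instead factor everything through the belief-support CoMDP \( \bar{M} \) (the closed \( \eguess \)-region), prove \( \tlevPR^{\tok\cpomdp'}(\bels,\guess)=\tlevPR^{\bar{M}}(\bels) \) by a lifting inequality and a projection inequality (with witness-path splicing handling positive reachability), and in the furthermore part you switch at \emph{any} surviving reload; since the pruning loop removes all token variants of a support simultaneously, surviving \( \eguess \)-reloads also have finite \( \tlevPR^{\tok\cpomdp'} \), so Lemma~\ref{lem:guess-has-nonempty-succ} becomes unnecessary in your route. As for what each approach buys: the paper's construction keeps the valid-token machinery in play, which it reuses later when tokens are needed to couple \( \tok\cpomdp' \) to the actual CoPOMDP state (e.g.\ in the proof of Lemma~\ref{app:lemma-tlev-leq-tlevpr}); your version yields a marginally stronger conclusion (the common value also equals \( \tlevPR^{\tok\cpomdp'}(\bels,\eguess) \), something the paper recovers only through its ``furthermore'' clause), and it isolates the single delicate step --- that observing the token cannot help the controller --- into an explicit projection/game argument, where the paper settles for the informal remark that ``the position of the token is irrelevant.'' Your anticipated obstacle is real but surmountable exactly as you sketch it: for the qualitative objectives at hand only the supports of the transition distributions matter, so the mismatch between uniform-over-pairs and uniform-over-supports probabilities is harmless, and the safety projection goes through by simulating a token inside the \( \bar{M} \)-policy.
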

\begin{proof}
	Let \(\bels \subseteq 2^\states\) be such that \(\alpha,\beta\in \bels \), and w.l.o.g. \( \tlevPR^{\tok\cpomdp'}(\bels,\alpha) < \tlevPR^{\tok\cpomdp'}(\bels,\beta)\). 
	
	
	Since every safe positive-goal policy is a safe policy it holds that for the initial history \((\bels,\alpha)\tlevPR^{\tok\cpomdp'}(\bels,\alpha) \) there exists a safe policy \(\sigma \). As \(\tok\cpomdp'\) is consistent, the consumption of every tuple \(\tok\cons((\bels,\alpha),a) \) is independent from \(\alpha \). And when considering reload states the token is irrelevant as well. Thus if we were to consider only safe policies, the position of the token is irrelevant. Therefore \(\sigma \) is also a safe policy for initial history \((\bels,\beta)\tlevPR^{\tok\cpomdp'}(\bels,\alpha) \). But we can use \(\sigma \) to construct a safe positive-goal policy \(\pi' \) for history \((\bels,\beta)\tlevPR^{\tok\cpomdp'}(\bels,\alpha) \)  as follows:
	Simulate \(\sigma \) until a reload \((\bels',\alpha')\in \tok\reloads' \) where \(\alpha'\neq \varepsilon \) is reached, and then simulate any safe positive-goal policy for initial history \((\bels',\alpha')cap \) (note that such policy always exists due to the way \(\tok\cpomdp'\) was constructed).
	
	Clearly \(\pi' \) is a safe policy, and so either the computation reaches a goal and never leaves, or it must eventually reach a reload in at most a bounded number of steps (otherwise it would run out of resource). If the probability of reaching a goal first is positive, then \(\pi' \) is a positive-goal policy. Let us therefore assume \(\pi' \) reaches a reload first with probability 1. If the reached reload were to be a state \((\bels',\alpha')\) for \(\alpha'\neq \varepsilon \) then this would mean that \(\pi' \) is positive-goal as this would mean \( \pi\) eventually behaves like \(\pi' \) with positive probability. Therefore for \(\pi' \) to not be positive-goal it has to hold  with probability 1 that the first reload \(\pi' \) reaches is of the form \((\bels',\varepsilon) \). However this is not possible due to Lemma~\ref{lem:guess-has-nonempty-succ}.
\end{proof}

\begin{lemma}\label{lem:guess-has-nonempty-succ}
	For any \(\bels \subseteq 2^\states, \alpha\in \bels\) and each \(a\in \act \) it holds \(\emptyset \neq  \{(\bels',\beta)\in \tok\mysucc((\bels,\alpha),a)\mid \beta\neq \varepsilon\} \).
\end{lemma}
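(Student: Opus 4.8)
The plan is to produce a single successor of $(\bels,\alpha)$ under $a$ whose token is a genuine state (not $\varepsilon$) by tracking one concrete state transition of $\alpha$ together with a compatible observation, and then reading off the definition of $\tok\mysucc$.

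First I would observe that $\mysucc(\alpha,a)$ is non-empty, since it is the support of the distribution $\trans(\alpha,a)$, which is non-empty by definition. So I fix some $t \in \mysucc(\alpha,a)$. As $\obsfunc(t)$ is likewise a distribution, its support is non-empty, and I fix some observation $\ob \in \supp(\obsfunc(t))$; in particular $\obsfunc(\ob \mid t) > 0$ for a state $t \in \mysucc(\alpha,a)$.

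Next I would use $\ob$ to single out a concrete $\alpha$-successor belief support. Because $\bels$ is a belief support of $\cpomdp$, there is a history $\hat h$ with $\belsupf(\hat h) = \bels$; extending it by $a$, the observation $\ob$, and a resource level $\ell$ chosen consistently with $\alpha$ taking action $a$ gives a valid history $h = \hat h a \ob \ell$, and $\belsupf(h)$ is then an $\alpha$-successor of $\bels$ under $a$, hence $\belsupf(h) \in \bsucc(\bels,a,\alpha) \subseteq \mysucc(\bels,a)$. The crucial claim is $t \in \belsupf(h)$: unwinding the belief-support update, $\belsupf(h)$ is the union, over all $s' \in \bels$ conforming to $h$, of $\mysucc(s',a) \cap \{u \mid \ob \in \supp(\obsfunc(u))\}$; since $\alpha$ conforms to $h$ and $t \in \mysucc(\alpha,a)$ with $\ob \in \supp(\obsfunc(t))$, the state $t$ lies in this union. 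Writing $\bels' = \belsupf(h)$ and $\alpha' = t$, we obtain $\bels' \in \mysucc(\bels,a)$ and $\alpha' \in \mysucc(\alpha,a) \cap \bels'$, so by the first case of the definition of $\tok\mysucc$ (the case $\alpha \neq \varepsilon$) the pair $(\bels',\alpha')$ belongs to $\tok\mysucc((\bels,\alpha),a)$ with $\alpha' \neq \varepsilon$, which is exactly the desired non-emptiness.

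The main obstacle is the resource-conformance bookkeeping in the step $t \in \belsupf(h)$: I must guarantee that the resource level $\ell$ in the extended history can be chosen so that $\alpha$ conforms to $h$ in \emph{every} case. Taking a prefix $\hat h$ whose last level is $\ca$ makes $\alpha$ conform with $\ell = \ca_\alpha - \cons(\alpha,a) \neq \bot$ whenever $\cons(\alpha,a) \le \ca$; and when consumption forces $\ell = \bot$, the state $\alpha$ still conforms via the exhaustion clause ($\cons(\alpha,a) \ge \hat\ell_\alpha$) and $t$ still appears in the union, so the argument is uniform. Once this single delicate point is settled, the conclusion is a direct unfolding of the definitions of $\mysucc$, $\bsucc$, and $\tok\mysucc$.
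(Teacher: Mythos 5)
Your proof is correct and is essentially the paper's own argument: fix \( t \in \mysucc(\alpha,a) \) and \( \ob \in \supp(\obsfunc(t)) \), note that the belief support reached after playing \( a \) and observing \( \ob \) contains \( t \), and conclude that the corresponding pair \( (\bels',t) \in \tok\mysucc((\bels,\alpha),a) \) has token \( t \neq \eguess \). One caveat: you cannot in general pick a prefix \( \hat{h} \) with \( \belsupf(\hat{h}) = \bels \) \emph{and} last resource level \( \ca \) (the levels attainable together with a given belief support are constrained by the history that produced it), but this is harmless because your own ``uniform'' case analysis — choosing \( \ell = \resup(\hat{\ell},\alpha,a) \), so that \( \alpha \) conforms to \( \hat{h}a\ob\ell \) either via the normal-update clause or via the exhaustion clause, whatever the last level \( \hat{\ell} \) of the prefix happens to be — already makes the argument go through without that assumption.
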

\begin{proof} 
	Let \(s\in \mysucc(\alpha,a) \) and \(\ob\in \supp(\obsfunc(s)) \). Then there exists \(\bels'\in \belsupup(\bels,a,\ob,\cons(\alpha,a))\) such that \(s\in \bels' \), and therefore \(\bels'\in \mysucc(\bels,a) \). Which implies \((\bels',s)\in \tok\mysucc((\bels,\alpha),a) \).
\end{proof}

We are now ready to prove the main Lemma~\ref{app:lemma-tlev-is-tlevpr}. This is done via the two following lemmas.

\begin{lemma}\label{app:lemma-tlev-geq-tlevpr}
	For any history \(h\in \hist \) it holds \( \tlev^\cpomdp(h) \geq \tlevPR^{\tok\cpomdp'}(\belsupf(h)) \).
\end{lemma}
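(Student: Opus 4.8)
The plan is to prove the inequality by reducing it, for each history $h$, to the construction of a token-level policy. Write $\bels = \belsupf(h)$ and $\ell = \tlev^\cpomdp(h)$; if $\ell = \infty$ the claim is trivial, so assume $\ell < \infty$. By Definition~\ref{def:thr-level} there is a safe goal policy $\pi$ in $\cpomdp$ started from the uniform distribution over $\bels$ with initial resource level $\ell$, and since the objective is qualitative we may take $\pi$ to be belief-support based (possibly with finite memory carried along the run). I would then exhibit a \emph{safe positive-goal policy} $\hat\pi$ in the pruned token CoMDP $\tok\cpomdp'$ starting from $(\bels,\guess)$, for an arbitrary $\guess \in \bels$, with the same initial level $\ell$; by Lemma~\ref{app:lemma-safe-produce-positive} this yields $\tlevPR^{\tok\cpomdp'}(\bels) \le \ell = \tlev^\cpomdp(h)$, which is exactly the desired inequality.

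The mimicking policy replays $\pi$ driven by the belief-support component: at a token run it plays the action $\pi$ prescribes for the belief-support sequence obtained by projecting the run onto its first coordinates. Because the first component of every element of $\tok\mysucc((\bels',\guess'),a)$ ranges over $\mysucc(\bels',a)$, and because $\tok\cons$ and the reload status of a token state depend only on its belief-support component, the belief-support trace and the resource level along any run of $\hat\pi$ coincide with a belief-support trace realizable by $\pi$ in $\cpomdp$. For the positive-goal property I use that $\pi$ reaches $\tgt$ almost surely, so some finite trajectory of positive probability ends in a goal observation; the token can track a correct guess along this trajectory via the clause $\guess' \in \mysucc(\guess,a)\cap\bels'$ (available since successive true states lie in the corresponding successor belief supports), and the uniform branching of $\tok\trans$ gives this branch positive probability. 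Since goal states are observable, the belief support reached at the end is contained in $\tgt$, i.e.\ it is a token goal; hence $\hat\pi$ reaches a goal with positive probability.

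The delicate point, which I expect to be the main obstacle, is \emph{safety} of $\hat\pi$ in $\tok\cpomdp'$: safety is immediate in the unpruned token CoMDP, but $\tok\cpomdp'$ has stripped the reloading property from the pruned states, so I must guarantee that every reload on which $\pi$ relies survives pruning. I would establish, by induction on the pruning rounds, the invariant: \emph{if $\bels^\ast \subseteq \reloads$ and the original CoPOMDP admits a safe goal policy from the uniform distribution over $\bels^\ast$ with full capacity $\ca$, then $(\bels^\ast,\guess)$ is never removed from the reload set.} The base case is vacuous. For the inductive step, writing $\tok\cpomdp^{(i)}$ for the token CoMDP at the start of round $i$, I mimic as above the safe goal policy witnessing goodness of $\bels^\ast$: any reload it visits is, by the continuation argument, itself good, hence still present in $\tok\cpomdp^{(i)}$ by the induction hypothesis, so the mimic is safe and positive-goal in $\tok\cpomdp^{(i)}$; thus $\tlevPR^{\tok\cpomdp^{(i)}}(\bels^\ast,\guess) \le \ca < \infty$ and $\bels^\ast$ is not pruned in round $i$. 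Lemma~\ref{lem:guess-has-nonempty-succ} is used here to keep a non-$\eguess$ token available so that the positive-goal branch always exists.

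With the invariant in hand the argument closes: every reload visited by $\pi$ is good (after reloading there, the continuation of $\pi$ is a safe goal policy from full capacity), hence is still reloading in $\tok\cpomdp'$, so $\hat\pi$ reloads exactly where $\pi$ does and, the resource evolutions being identical, never exhausts the resource. Therefore $\hat\pi$ is a safe positive-goal policy in $\tok\cpomdp'$ from $(\bels,\guess)$ at level $\ell$, giving $\tlevPR^{\tok\cpomdp'}(\belsupf(h)) \le \tlev^\cpomdp(h)$ and completing the proof.
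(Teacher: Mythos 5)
Your overall architecture coincides with the paper's own proof: transfer the safe goal policy $\pi$ from $\cpomdp$ to the token CoMDP, show by induction over the pruning rounds that every reload $\pi$ relies on survives pruning (your invariant ``good reloads are never pruned'' is the contrapositive form of the paper's minimal-counterexample argument over rounds $\tok\reloads^0,\dots,\tok\reloads^k$), conclude that the transferred policy is safe in $\tok\cpomdp'$, and close via Lemma~\ref{app:lemma-safe-produce-positive}. Your positive-goal argument — tracking a correct token along a positive-probability trajectory of $\pi$ using the clause $\guess'\in\mysucc(\guess,a)\cap\bels'$ and the uniform branching of $\tok\trans$ — is also the one the paper uses.

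There is, however, a genuine gap at the very first step: the claim that ``since the objective is qualitative we may take $\pi$ to be belief-support based (possibly with finite memory).'' This is not a fact you can invoke here. In the consumption setting, sufficiency of belief-support-based (plus resource-level) policies for safe goal objectives is essentially \emph{equivalent} to what this lemma (together with Lemma~\ref{app:lemma-tlev-leq-tlevpr}) is being used to establish in Theorem~\ref{thm:shield-al-correct}, so assuming it makes the argument circular; and one cannot simply cite the standard POMDP results, since resource levels change the notion of belief support. The assumption is also load-bearing: your mimicking policy $\hat\pi$ is defined by feeding $\pi$ the belief-support sequence projected from the token run, which is ill-defined for an arbitrary history-based $\pi$, because distinct observation histories with the same belief-support trace may receive different actions. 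The paper circumvents exactly this difficulty by equipping the token CoMDP with an \emph{observation function} — each token state $(\bels,\alpha)$ emits an observation common to all states of $\bels$ — turning it into a CoPOMDP over the same observation and action alphabet, so that the history-based $\pi$ can be run on it verbatim; consistency of consumption and observability of reloads then make the resource bookkeeping match. Your proof could be repaired either by adopting this trick, or by independently justifying the WLOG (e.g., by reducing to belief-support sufficiency for almost-sure reachability in the exponentially larger POMDP that encodes resource levels into states), but as written the step is unsupported and the rest of the construction hangs on it.
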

\begin{proof}
	Let \(\tok\reloads^0,\dots ,\tok\reloads^k \) be such that \(\tok\reloads^0=\tok\reloads \), \(\tok\reloads^k=\tok\reloads' \), and \(\tok\reloads^i \) is obtained by applying a single iteration of the repeat cycle at lines~\ref{algline-rep-start}-\ref{algline-rep-end} of Algorithm~\ref{algo:shield} onto \(\tok\reloads^{i-1} \). And let us denote by \(\tlevPR^i((\bels,\alpha)) \) the value \(\tlevPR^{\tok\cpomdp^i}((\bels,\alpha)) \) computed in \(\tok\cpomdp^i=(\tok\states,\act,\tok\trans,\tok\cons,\tok\reloads^i,\ca) \). 

	Assume we have a safe goal strategy \(\pi \) for \(h \) in \(\cpomdp\). We will show that \(\pi \) cannot generate a history \(\widehat{h}=hh' \) such that \((\belsupf(\widehat{h}),\alpha)\in \tok\reloads \setminus \tok\reloads' \) for any \( \alpha\). 

	Let us assume this is not the case, and let \(i>0 \) be the smallest such that \((\belsupf(\widehat{h}),\alpha)\in \tok\reloads \setminus \tok\reloads^i \) for some \(\alpha \). Then there exists \(\beta\in \belsupf(\widehat{h}) \) such that \(\tlevPR^{i-1}((\belsupf(\widehat{h}),\beta))=\infty \), and at the same time there is positive probability that the current state of \(\cpomdp \) after history \(\widehat{h} \) is \(\beta \).
	
	If we define an observation function over \(\tok\states \), such that the observation of  \((\bels,\alpha)\) is chosen uniformly at random from the set \(\{o\in \obs \mid \bigcap_{s\in \bels} \supp(\obsfunc(s)) \}\) (note that this set is never empty due to the nature of belief supports), then we can consider \(\tok\cpomdp^{i-1} \) as a CPOMDP with the same set of observations and actions as \(\cpomdp \), and therefore we can apply \(\pi \) to \(\tok\cpomdp^{i-1} \). Let us consider what happens if we run \(\pi \) on \(\tok\cpomdp^{i-1}\) from initial state \((\belsupf(\widehat{h}),\beta) \).  Since \(\tlevPR^{i-1}((\belsupf(\widehat{h}),\beta))=\infty \) this means such computation either has positive probability of running out of resource, or probability of never reaching the goal is 1. 
	
	As \(\pi \) is a safe goal policy in \(\cpomdp \), and \(\cpomdp \) is consistent,the resource level depends only on observations, which are the same in both \(\cpomdp \) as well as \(\cpomdp^{i-1} \), therefore the only way \(\pi \) runs out of resource in \(\cpomdp^{i-1} \) is if it reaches some reload \((\bels',\beta')\in \tok\reloads \setminus \tok\reloads^{i-1} \), but this is a contradiction with how we picked \(i \). Therefore it must hold that \(\pi \) never reaches a goal in \(\cpomdp^{i-1} \). But as \(\pi \) is a goal policy in \(\cpomdp \) there exists a finite path in \(\cpomdp \) under \(\pi \) from \(\beta \) to a goal   such that \(\pi \) has positive probability of producing this path in \(\cpomdp \) if started from history \(\widehat{h} \). Therefore in each step of the computation on \(\cpomdp^{i-1} \) there is a positive probability of the next state being \((\belsupf(h'),s') \), where \(h',s' \) are the next history/state pair of this path in \(\cpomdp \), and therefore  \(\tlevPR^{i-1}((\belsupf(\widehat{h}),\beta))<\infty \), a contradiction.
	
	Therefore \(\pi \) never produces a history \( h\) such that \((\belsupf(h),\varepsilon)\in \tok\reloads \setminus \tok\reloads' \). Therefore if we were to apply policy \(\pi \) onto \(\tok\cpomdp' \) from history \((\belsupf(h),\epsilon)\tlev(h) \), \(\pi\) would be a safe policy, and thus from Lemma~\ref{app:lemma-safe-produce-positive} we obtain \( \tlev^\cpomdp(h) \geq \tlevPR^{\tok\cpomdp'}(\belsupf(h)) \).
\end{proof}

\begin{lemma} \label{app:lemma-tlev-leq-tlevpr}
For any history \( h\in\hist \) it holds \( \tlev^\cpomdp(h) \leq \tlevPR^{\tok\cpomdp'}(\belsupf(h)) \).
\end{lemma}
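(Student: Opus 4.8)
The plan is to establish the inequality constructively: from the data witnessing $\tlevPR^{\tok\cpomdp'}(\belsupf(h)) = \ell$ I will build an explicit safe goal policy for $\cpomdp$ started from the uniform distribution over $B := \belsupf(h)$ at initial level $\ell$, which directly gives $\tlev^\cpomdp(h) \leq \ell$. The backbone is a \emph{coupling} between the two models: any \emph{belief-support-based} policy $\pi$ (one whose decisions depend only on the computable sequence of belief supports and resource levels) can be run simultaneously in $\cpomdp$ and in $\tok\cpomdp'$ with the token initialized to, and thereafter tracking, the true state. Because consumption and reload membership in $\tok\cpomdp'$ depend only on the belief support, this coupling preserves resource levels step for step; because the true state always lies in the current belief support, the coupled token stays valid ($\neq \eguess$); and because goal states are observable, reaching a goal in $\cpomdp$ is the belief-support-visible event $\belsupf(\cdot) \subseteq \tgt$, which corresponds exactly to the token reaching a goal with a valid guess. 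Hence it suffices to produce one safe belief-support-based policy that, from every true state $s \in B$, reaches an all-goal belief support almost surely.

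First I would fix the safety skeleton. By hypothesis there is a safe (positive-goal) policy in $\tok\cpomdp'$ from level $\ell$, and by Lemma~\ref{app:lemma-safe-produce-positive} safety in $\tok\cpomdp'$ is token-independent, so this policy may be taken belief-support-based. Running it in $\cpomdp$ never exhausts the resource, and by the finite-postponement assumption the run therefore either reaches a goal or visits non-pruned reload belief supports infinitely often. This yields infinitely many ``fresh starts'' at full capacity $\ca$, at each of which the surviving reload $B'$ satisfies $\tlevPR^{\tok\cpomdp'}(B') \leq \ca$ (finiteness of the threshold at surviving reloads is exactly what the pruning loop guarantees upon termination).

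Second, I would manufacture uniform positive progress at each such reload. For a surviving reload $B'$ and each guess $\beta \in B'$ there is a safe positive-goal policy $\sigma_\beta$ from $(B',\beta)$ at level $\ca$; being safe, $\sigma_\beta$ is belief-support-based, so by the coupling it reaches an all-goal belief support with positive probability whenever the true state equals $\beta$. The progress sub-policy for a segment beginning in $B'$ first draws $\beta \in B'$ uniformly at random and then follows $\sigma_\beta$; this is again belief-support-based (with one bit of memory) and safe, and from \emph{every} true state in $B'$ it reaches the goal with probability at least $1/|B'|$ times the success probability of the matching $\sigma_\beta$. Since $\tok\cpomdp'$ is finite, minimizing over the finitely many surviving reloads yields a uniform lower bound $p > 0$ together with a bounded segment length. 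The composite policy then runs the safety skeleton until a surviving reload is hit, runs the randomized progress sub-policy for that segment, and, if the goal is not reached, relies on safety to arrive at another surviving reload where a fresh segment begins. By the strong Markov property the segments are independent attempts each succeeding with probability at least $p$, so a Borel--Cantelli-style argument gives reachability of an all-goal belief support with probability $1$; safety holds throughout by construction, so the composite is a safe goal policy from $B$ at level $\ell$, and $\tlev^\cpomdp(h) \leq \tlevPR^{\tok\cpomdp'}(\belsupf(h))$.

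The step I expect to be the main obstacle is the second one: converting the per-guess, potentially token-aware positive-goal policies of $\tok\cpomdp'$ into a single \emph{token-blind} belief-support policy that makes positive progress regardless of which true state the agent actually occupies. The essential ideas making this work are that safety (hence the reachability of reloads) is token-independent, that positive-goal-ness of a belief-support policy transfers through the coupling precisely when the true state equals the guess, and that randomizing uniformly over the finitely many guesses recovers a uniform positive success probability; the delicate point is that the randomized mixture must remain safe, which holds because each $\sigma_\beta$ is individually safe and every segment begins at full capacity $\ca$.
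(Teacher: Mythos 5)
Your overall architecture matches the paper's proof: a safety skeleton that keeps visiting surviving reloads, a uniformly random guess of the true state at each such reload, a bounded-length positive-progress segment from that guess, and a Borel--Cantelli-style conclusion from infinitely many independent attempts. However, there is a genuine gap at the step you yourself flag as the main obstacle, and your resolution of it does not work. You assert that the safe positive-goal policy \( \sigma_\beta \) from \( (B',\beta) \) in \( \tok\cpomdp' \) ``being safe, is belief-support-based,'' citing Lemma~\ref{app:lemma-safe-produce-positive}. That lemma only establishes token-independence of \emph{safety} (because \( \tok\cons \) and membership in \( \tok\reloads' \) ignore the token); it says nothing about positive-goal policies being realizable without looking at the token. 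In general, policies in \( \tok\cpomdp' \) are strictly more powerful than belief-support-based ones precisely because they can condition on the token --- this is the entire reason the token construction and the pruning loop exist (cf.\ the example of Figure~\ref{fig-gadgets}, where a token-aware safe goal policy exists in the unpruned token CoMDP while no safe goal policy exists in \( \cpomdp \)). For the \emph{pruned} \( \tok\cpomdp' \) the statement ``token-blind safe positive-goal policies exist from surviving reloads'' does turn out to be true, but it is essentially equivalent to the lemma you are trying to prove, so assuming it is circular. Your coupling device also cannot rescue this: a coupling in which the token ``tracks the true state'' is only an analysis tool for policies that are already executable given observable information; a token-aware \( \sigma_\beta \) simply cannot be run in \( \cpomdp \), where the true state is hidden.

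The paper closes exactly this hole by construction rather than by assertion: the CoPOMDP policy \emph{internally simulates} a run of \( \tok\cpomdp' \), choosing the token uniformly at random at surviving reloads, updating it uniformly among successors consistent with the observed belief support, and feeding the (possibly token-aware) policies \( \sigma' \) and \( \pi' \) the \emph{simulated} state; a counter bounds each progress segment by \( |\tok\states|\cdot\ca \) steps. Correctness then rests on three invariants --- the simulated belief support equals the real one, the simulated resource level is a lower bound on the real one, and the simulated level stays above \( \tlevPR^{\tok\cpomdp'} \) --- which give safety, while positive progress comes from the event that the randomly guessed and randomly updated simulated token coincides with the true state along a goal-reaching path of \( \pi' \), an event of positive probability in each segment. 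If you replace your unjustified ``\( \sigma_\beta \) is belief-support-based'' with this internal-simulation construction (and prove the accompanying invariants), your argument becomes the paper's proof; without it, the central step is missing.
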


\begin{proof}
Let \(\pi' \) be a policy that is safe positive-goal in \(\tok\cpomdp'\) for every history \((\bels,\alpha)\tlevPR^{\tok\cpomdp'}(\bels) \) where \(\tlevPR^{\tok\cpomdp'}((\bels))\neq \infty\) and \(\alpha\neq \varepsilon \), and let \(\sigma' \) be a policy that is safe for every history \((\bels,\alpha)\tlevPR^{\tok\cpomdp'}(\bels) \) in \(\tok\cpomdp'\) where \(\tlevPR^{\tok\cpomdp'}((\bels))\neq \infty\).
	
	Let us consider a policy \(\pi\) on the original CPOMDP which behaves as follows:
	\(\pi \) internally simulates a computation on \(\tok\cpomdp'\) and holds a counter of size \(|\tok\states|\cdot \ca \). For initial history \(h=h' \ob_0\ell_0 \) the simulation gets initialized to a state \((\belsupf(h),\epsilon) \) and the counter is set to \(0\). Also let us denote by \(\tok\mysucc((\bels,\alpha),a,\bels')\) the set \(\{(\widehat\bels,\beta)\in \tok\mysucc((\bels,\alpha),a) \mid \widehat\bels=\bels'  \} \). Let \((\bels,\alpha)\) and \(\ell \) be the current state and resource level of the simulation, respectively. Then \(\pi \) behaves as follows:
	\begin{itemize}
		\item If counter is \(0 \), then:
		\begin{itemize}
			\item if \((\bels,\alpha)\in \tok\reloads'\), then randomly at uniform choose \(\beta\in \bels \), set the current state of simulation to \((\bels,\beta)\) and set the counter to \(|\tok\states|\cdot \ca \).
			\item  if \((\bels,\alpha)\notin \tok\reloads'\) let \(a \) be the next action chosen by \(\sigma'\) in the simulation. Then output \(a\) as the next action in the original CPOMDP, and let \(h=h'a\ob\ell \) be the resulting history. Then change the state of the simulation to a state chosen uniformly at random from \(\tok\mysucc((\bels,\alpha),a,\belsupf(h))\) and set resource level to \(\resup(\ell,(\bels,\alpha),a)\).
		\end{itemize}
	\item if counter is not \(0 \) then let \(a \) be the next action chosen by \(\pi'\) in the simulation. Then output \(a\) as the next action in the original CPOMDP, and let \(h=h'a\ob\ell \) be the resulting history. Then change the state of the simulation to a state chosen uniformly at random from \(\tok\mysucc((\bels,\alpha),a,\belsupf(h))\) and set resource level to \(\resup(\ell,(\bels,\alpha),a)\). Finally decrease the counter by \(1 \).
	\end{itemize}

	The way \(\pi \) behaves is that whenever the counter is 0, it simulates the safe policy \(\sigma' \) until a reload is reached, at which point it tries to guess which state \(\cpomdp\) is in by placing a token at one of the states in the current belief support. Afterwards it simulates the safe positive-goal policy \(\pi' \) for \(|\tok\states|\cdot \ca \) steps (note that this is enough to give us a positive probability of reaching a goal) after which the counter is set to \(0 \) again and the whole process repeats. Since both \(\pi' \) and \(\sigma' \) are safe policies on \(\tok\cpomdp' \), and \(\cpomdp \) is consistent, \(\pi \) is also safe. And since each simulation of \(\sigma' \) can last at most a bounded number of steps (unless a goal is reached), the entire process can be repeated infinitely often until a goal is reached. Therefore \(\pi \) is a safe goal policy.
	More formal proof follows:
	
	We claim that \(\pi \) is a safe goal policy in \(\cpomdp \) for any history \(h=h'\ob_0\ell_0 \) where \(\ell_0\geq \tlevPR^{\tok\cpomdp'}(\belsupf(h))\).
	
	First we need to prove some invariants. Let \(h_i=\widehat{h}\ob_0\ell_0a_0\dots a_{i-1}\ob_i\ell_i \) be a history produced by \(\pi \) from \(h\) at time \( i\), and let  \((\bels_i,\alpha_i) \), \(\ell_i' \) be the state and resource level in the simulation at time \(i \), respectively. 
	
	\begin{invariant}\label{app:inv1}
		\( \belsupf(h_i)=\bels_i\)
	\end{invariant}
	\begin{proof}
		At time \(0 \) this holds. Assume it holds at step \(i-1 \). States in simulation can change only in two ways, either the token is moved within the same belief support, or the state changes from \((\bels,\alpha)\) to a state from \(\tok\mysucc((\belsupf(h_{i-1}),\alpha),a_{i-1},\belsupf(h_{i}))\). The former case has no effect on the belief support, and in the latter case the statement holds as long as \(\tok\mysucc((\belsupf(h_{i-1}),\alpha),a_{i-1},\belsupf(h_{i})) \neq \emptyset \). However \(\belsupf(h_i) \) is defined as \(\belsupup(\belsupf(h_{i-1}), a_{i-1},\ob_i,\obc(o_{i-1},\ell_{i-1},\ell_i))\neq \emptyset \), therefore it holds \(\belsupf(h_{i})\in \mysucc((\belsupf(h_{i-1}),a_{i-1}) \), and therefore there exists \((\belsupf(h_{i}),\beta)\in \tok\mysucc((\belsupf(h_{i-1}),\alpha),a_{i-1},\belsupf{(h_i)}) \).
		\end{proof} 
	
	\begin{invariant}\label{app:inv2}
		\(\ell_i'\leq \ell_i \)
	\end{invariant}
	\begin{proof}
 At step \(0 \) this holds. Assume this holds at step \( i-1\). The energy levels change only when \(\pi \) outputs an action \(a_{i-1} \), then it holds \(\ell_i=\resup(\ell_{i-1},s_{i-1},a_{i-1}) \) and \(\ell_i'=\resup(\ell_{i-1}',(\belsupf(h_{i-1}),\alpha_{i-1}),a_{i-1}) \). From \(s_{i-1}\in \belsupf(h_{i-1}) \) and \(\cpomdp \) being consistent, it holds that \(\cons(s_{i-1},a_{i-1})=\tok\cons((\belsupf(h_{i-1}),\alpha_{i-1}),a_{i-1} ) \). Also it holds that \((\belsupf(h_{i-1}),\alpha_{i-1})\in \tok\reloads' \) implies \(\belsupf(h_{i-1})\subseteq \reloads \). Therefore both levels are decreased by the same amount, and if \(\ell_{i-1}' \) is reloaded then also \(\ell_{i-1} \) is reloaded. 
\end{proof}

	\begin{invariant}\label{app:inv3}
	If \(\ell_0\geq  \tlevPR^{\tok\cpomdp'}(\belsupf(h_0)) \) then \(\ell_i' \geq \tlevPR^{\tok\cpomdp'}(\belsupf(h_i)) \)	
	\end{invariant}
	\begin{proof}
		 \(i=0 \) is trivial. Let us assume this holds for \(i-1 \). The only time when either \(\tlevPR^{\tok\cpomdp'}(\belsupf(h_i)) \) or \(\ell_i' \) changes is when \(\pi \) chooses an action \(a_{i-1} \). However as this action is chosen using a safe policy in the simulation, the new state of the simulation is a valid successor according to the used policy, and the energy level is adjusted accordingly as well, this can be seen as a computational step in \(\tok\cpomdp' \) under a safe policy. Therefore there is a safe policy from \(\belsupf(h_i)\ell_i' \). And from Lemma~\ref{app:lemma-safe-produce-positive} we obtain \(\ell_i' \geq \tlevPR^{\tok\cpomdp'}(\belsupf(h_i))\).
	 \end{proof}

	From Invariants~\ref{app:inv2} and~\ref{app:inv3} we have that \(\pi \) is safe. And therefore it holds that as long as \(\pi \) does not reach a goal it must reach a reload infinitely often (or run out of resource). This property holds also inside of the simulation. This means, that if the counter is set to \(0 \), the simulation either reaches a goal, or eventually begins to simulate \(\pi' \) from some reload \((\bels_1,\alpha_1)\in \tok\reloads' \) for some \(\alpha_1\neq \varepsilon \). Since \(\pi' \) is positive-goal, there exists a path \((\bels_1,\alpha_1),\dots,(\bels_m,\alpha_m) \) of length at most \(|\tok\states|\cdot \ca \) under \(\pi' \)  in \(\tok\cpomdp' \) ending in a goal (as in CMDP it is sufficient to consider the current state and resource level, and there are only \(|\tok\states|\cdot \ca \) different configurations). Note that this holds for each \(\alpha\in \bels_1 \). Since \(\alpha_1 \) is chosen uniformly at random, there is a positive probability the current state \(s_1\) of \(\cpomdp \) will be chosen. And then during each of the at most \(|\tok\states|\cdot \ca \) steps there is a positive probability that the next \(\alpha_i \) will be chosen as the next state \(s_i \). Therefore there is a positive probability that \(\pi \) will reach a goal within the \(|\tok\states|\cdot \ca \) steps, and in the case this does not happen, the whole process repeats, thus ensuring that eventually the goal will be reached with probability 1. 
	Therefore \(\ell_0\geq  \tlevPR^{\tok\cpomdp'}(\belsupf(h_0)) \) implies that \(\pi \) is a safe goal policy for the history \(h=h'\ob_0\ell_0 \), and therefore \(\tlev^{\cpomdp}(h)\leq \tlevPR^{\tok\cpomdp'}(\belsupf(h)) \).
\end{proof}

\section{Proof of Theorem~\ref{thm:fipomdp}}

\begin{theorem*}[\textbf{3}]
Let \( N \) be the decision horizon and consider a finite horizon approximation of the RSGO problem where the costs are accumulated only over the first \( N \) steps. 
Consider any decision step of FiPOMDP and let \( h \) be the history represented by the current root node of the search tree. Let \( p_h \) be the probability that the action selected by POMCP to be played by the agent is an action used in \( h \) by an optimal finite-horizon safe goal policy, and \( \mathit{sim} \) the number of simulations used by FiPOMDP. Then for \( \mathit{sim} \rightarrow \infty \) we have that \( p_h \rightarrow 1 \).
\end{theorem*}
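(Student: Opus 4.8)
The plan is to reduce the statement to the known convergence guarantee of POMCP. The only difference between FiPOMDP and plain POMCP is that, at every node, FiPOMDP restricts the choice of actions to the shield-enabled set \( \act_h = \{a \in \act : \shield(\belsupf(h), \ell_h, a) = \en\} \) during selection, rollouts, and final action extraction. Thus FiPOMDP is exactly POMCP applied to the finite-horizon planning problem obtained from \( \cpomdp \) by replacing, at each history \( h \), the full action set \( \act \) with \( \act_h \). I would first argue that this restricted problem is well posed, and then invoke the convergence of UCT-based Monte Carlo tree search.

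For well-posedness I would use the exactness of \( \shield \) (guaranteed by Theorem~\ref{thm:shield-al-correct}). Since FiPOMDP conforms to \( \shield \) by construction and \( \shield \) is exact, every history \( h \) reached during the search is safe and is not a trap; in particular, whenever the root configuration \( (\belsupf(h),\ell_h) \) is feasible (which, by Theorem~\ref{thm:shield-al-correct}, is equivalent to \( \act_h \neq \emptyset \)), every successor reached under a conforming policy again has a nonempty enabled set, so the restricted search space is non-blocking along all explored trajectories and the finite-horizon action-values \( Q^{\ast}_N(h,a) \) for \( a \in \act_h \) are well defined. Crucially, by the third clause of exactness, every safe goal policy -- in particular an optimal finite-horizon one -- conforms to \( \shield \) and therefore survives as a feasible policy of the restricted problem; hence the optimal finite-horizon safe-goal value is attained within the restricted action space.

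The convergence engine is the standard analysis of POMCP~\cite{SV:2010:POMCP}, which lifts the UCT convergence result of~\cite{KS06} to the partially observable, sampled-belief setting. Per-step costs are bounded (costs are nonnegative apart from finitely many negative-cost events, and the horizon is finite), so every simulated trajectory value lies in a fixed interval. Under these conditions the UCT estimates \( \hat{Q}(h,a) \) at the root converge, as \( \mathit{sim}\to\infty \), to the true finite-horizon optimal action-values over \( \act_h \), with the probability of selecting a sub-optimal action decaying to zero. Because FiPOMDP tracks the \emph{exact} belief support and resource level at every node, the statistic on which both \( \shield \) and the value back-ups depend is computed exactly, so the restriction is applied consistently across all simulations and the convergence argument transfers with only notational changes. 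Consequently the greedy choice at the root selects an action in \( \arg\min_{a\in\act_h} Q^{\ast}_N(h,a) \) with probability tending to \( 1 \), giving \( p_h \to 1 \).

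The main obstacle I anticipate is the alignment between ``optimal over conforming policies'' -- which is what the restricted search actually optimizes -- and ``optimal finite-horizon safe goal policy,'' against which the statement is phrased. The class of conforming policies is in general a strict superset of safe goal policies, since a conforming policy need only be safe and trap-avoiding, not goal-reaching; one must therefore rule out that some conforming non-goal policy attains strictly smaller finite-horizon cost than the best safe goal policy. I would close this gap using that trap-avoidance preserves goal-reachability at every step, so that under a finite-horizon objective in which failing to reach \( \tgt \) is the worst outcome, the restricted optimum is itself realized by a safe goal policy; its root action is then an optimal safe-goal action, which is precisely the action FiPOMDP converges to.
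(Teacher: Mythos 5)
Your proposal is correct and takes essentially the same route as the paper: the paper's entire proof is the single observation that FiPOMDP behaves exactly like POMCP run on the CoPOMDP whose history tree has been pruned by the computed shield, so Theorem~1 of \cite{SV:2010:POMCP} applies directly. Your additional care --- using exactness of the shield to show the pruned search space is non-blocking, that every optimal finite-horizon safe goal policy conforms and hence survives pruning, and flagging that conforming policies form a superset of safe goal policies so the two optima must be aligned --- fills in steps the paper's one-sentence proof leaves entirely implicit.
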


This follows directly from Theorem~1 in~\cite{SV:2010:POMCP}, since the behaviour of FiPOMDP mimics the behaviour of POMCP on a CoPOMDP whose history tree has been pruned by the computed shield.

\section{Benchmarks}
\label{app-sec:benchmarks}
The results in Table~1 were averaged over 100 runs of each experiment.

\subsection*{Resource-Constrained Tiger}
The first toy benchmark is a \emph{resource-constrained Tiger,} a modification of the classical \emph{Tiger} benchmark for POMDPs~\cite{kaelbling1998planning} adapted from~\cite{DBLP:journals/corr/BrazdilCCGN16}, where the goal states represent the situation where the agent has made a guess about the tiger's whereabouts. 

In the classical tiger, the agent is standing in front of two doors. Behind one door, there is a treasure, behind the other is a tiger; the agent does not now the position of the treasure/tiger. Still, he has an option to open one of the doors. Opening a door with a treasure yields a reward, while upon finding the tiger, the animal eats the poor agent, yielding a large negative penalty. Luckily, the agent has another action at its disposal: listening for the tiger's roar and movements. Upon playing the listening action, the agent receives one of the observations \emph{left door,} \emph{right door,} indicating the position of the tiger. The problem is that the observation is noisy: with some probability \( p \), the agent mishears and tiger is actually behind the other door than suggested by the observation. Hence, the agent has to repeat the listening sufficiently often to build enough confidence in the tiger's position.

In the resource-constrained variant, the agent's listening actions consume a single unit of energy, necessitating regular reloads (agent's total capacity is 10). During each reload, there is a \( 0.2 \) probability that the tiger switches its position with the treasure, which induces an interesting tradeoff into determining the optimal time in which the agent should make the guess. There is a cost of 10 per each step, opening door with tiger/treasure yields cost 5000/-500. We consider two versions: \emph{simple}, where the probability  \(p \) of an observation being correct is \( 0.85 \), and \emph{fuzzy}, where this probability is decreased to \( 0.6 \).

\paragraph{Tiger Hyperparameters.} For this benchmark, we use exploration constant 1, decision horizon 500, and 100 POMCP simulations per decision.

\subsection*{UUV Gridworld}

The second benchmark is a partially observable extension of the \emph{unmanned underwater vehicle (UUV)} benchmark from~\cite{BlaCNOTT:2021:FiMDP}. Here, the agent operates in a grid-world, with actions corresponding to movements in the cardinal directions. Each action has two versions: a \emph{strong} one, which guarantees that the agent moves in the required direction at the expense of larger consumption (2 units of the resource); and a \emph{weak} one, which is cheaper in terms of consumption (1 unit of resource) but under which the agent might drift sideways from the chosen direction due to ocean currents (the exact computation of the dynamics is described in the FiMDPEnv module presented in~\cite{BlaCNOTT:2021:FiMDP}. Partial observability is represented by a noisy position sensor: when the agent visits some cell of the grid, the observed position is sampled randomly from cells in the von Neumann neighbourhood of the true cell. We consider 4 gridworld sizes ranging from 8x8 to 20x20. There is a cost of 1 per step, hitting a goal yields ``cost'' -1000.

The hyperparameters for UUV Gridworld benchmarks are provided in Table~\ref{tab:hyperUUV}.

\begin{table*}[t!]
\centering
\begin{tabular}{lcccc}
\toprule
Benchmark & Exploration constant & Decision horizon & Simulations per step & Capacity\\
\midrule
UUV grid 8x8 & 25 & 100 & 1000 & 12 \\
UUV grid 12x12 & 50 & 100 & 1000 & 16 \\
UUV grid 16x16 & 200 & 100 & 1000 & 16 \\
UUV grid 20x20 & 200 & 100 & 1000 & 20
\end{tabular}
\caption{Hyperparameter settings ofr the UUV Gridworld benchmarks.}
\label{tab:hyperUUV}
\end{table*}

\subsection*{Manhattan AEV}
The description of this benchmark is provided in the main text of the paper. We use capacity 1000, decision horizon 1000, 100 simulations per step, and exploration constant 1.


\section{Reproducibility Checklist Clarifications}

\begin{itemize}
\item \textbf{Reproducibility checklist (4.3)}
\emph{Proofs of all novel claims are included.} Yes:
We include proofs of the lemmas and theorems in this technical appendix. Theorem~1 was proved in the previous work, which we cite.

\item \textbf{Reproducibility checklist (4.6)}
\emph{All theoretical claims are demonstrated empirically to hold.} Yes: Our main theoretical claim, implicitly contained Theorem 2, is that our algorithm computes a shield that prevents resource exhaustion. Indeed, in our experiments, the shielded agent never exhausted the resource.

\item\textbf{Reproducibility checklist (4.7)}
\emph{All experimental code used to eliminate or disprove claims is included.}
Not applicable: we do not use code to disprove or eliminate any claims.

\item \textbf{Reproducibility checklist (5.1)}
\emph{A motivation is given for why the experiments are conducted on the selected datasets.}
Partial: The (modified) Tiger benchmark was used because Tiger is a well-known fundamental benchmark for POMDPs. The UUV and Manhattan benchmarks was shown because they were used in the previous work on resource-constraint agent planning (albeit in perfectly observable setting), and our modifications incorporate observability in a natural way.

\item \textbf{Reproducibility checklist (5.2)}
\emph{All novel datasets introduced in this paper are included in a data appendix.}
Yes: Our data appendix contains Python modules that generate internal (i.e., used by our tool) representations of the POMDPs we experimented with.

\item \textbf{Reproducibility checklist (5.4)}
\emph{All datasets drawn from the existing literature (potentially including authors’ own previously published work) are accompanied by appropriate citations.}
Not applicable: While our benchmarks are inspired by benchmarks from the previous work (and we clearly reference that previous work), we employ their custom modifications (since the original versions do not consider either partial observability, or resource constraints). Hence, our benchmarks are not really ``drawn'' from the existing literature.

\item \textbf{Reproducibility checklist (6.2)}
\emph{All source code required for conducting and analyzing the experiments is included in a code appendix.}
Yes: we include an anonymized snapshot of our working repository together with a list of software requirements. The file \texttt{REPROD-GUIDE.md} in the root of our repository contains instructions on how to replicate our experiments.

\item \textbf{Reproducibility checklist (6.4)}
\emph{All source code implementing new methods have comments detailing the implementation, with references to the paper where each step comes from.}
Partial: The classes and methods in our implementation are documented with Python docstrings. However, we do not provide detailed references to the paper, since the current version of the paper was created only after the core of our implementation. Still, the overall structure of the implementation corresponds to the description in the paper and should thus be navigable.

\item \textbf{Reproducibility checklist (6.7)}
\emph{This paper formally describes evaluation metrics used and explains the motivation for choosing these metrics.}
Yes: the metrics used in Table~\ref{tab:results} (survival and reachability probabilities, expected costs) are standard and natural given the problem we are tackling.

\item \textbf{Reproducibility checklist (6.8)}
\emph{This paper states the number of algorithm runs used to compute each reported result.}
Yes: 100 runs per experiment, as stated at the beginning of Section~\ref{app-sec:benchmarks}. The section also contains description of hyperparameter values for the individual experiments.

\item \textbf{Reproducibility checklist (6.12)}
\emph{This paper states the number and range of values tried per (hyper-) parameter during development of the paper, along with the criterion used for selecting the final parameter setting.}
Partial: This pertains to the setting of the exploration constant hyperparameter. All experiments were tested with exploration constant 1 (since we use normalized costs in the UCT formulas). We briefly manually experimented with other values. For Tiger and Manhattan benchmarks, this did not elicit significant change. For the UUV benchmarks, we observed improvements in hit ratios when the exploration constant is increased: this is likely because the UUV benchmarks are rather structureless and a lot of exploration is needed to hit a goal (though too large exploration is, unsurprisingly, also damaging). We manually tried several values from the 1-200 range, picking the best in terms of hit ratio. We note that this manual tuning pertains to the POMCP part of our algorithm, and not to shielding, which is the main conceptual novelty of our paper. Also for this reason we did not employ any systematic hyperparameter tuning.
\end{itemize}

\end{document}